\newcommand*\circled[1]{\tikz[baseline=(char.base)]{
            \node[shape=circle,fill,inner sep=1pt] (char) {\footnotesize \textcolor{white}{#1}};}}
\DeclareMathOperator*{\argmin}{argmin} 
\newtheorem{theorem}{Theorem}
\theoremstyle{definition}
\newtheorem{lemma}[theorem]{Lemma}
\newtheorem{assumption}{Assumption}
\newtheorem{corollary}{Corollary}
\newenvironment{customthm}[1]
  {\innercustomthm}
  {\endinnercustomthm}
\newenvironment{customlemma}[1]
  {\innercustomlemma}
  {\endinnercustomlemma}
\title{Incorporating Domain Differential Equations into Graph Convolutional Networks to Lower Generalization Discrepancy}
\author{
    \textsuperscript{\rm 1}Yue Sun,
    \textsuperscript{\rm 1}Chao Chen,
    \textsuperscript{\rm 1}Yuesheng Xu,
    \textsuperscript{\rm 2}Sihong Xie,
    \textsuperscript{\rm 1}Rick S. Blum, 
    \textsuperscript{\rm 1}Parv Venkitasubramaniam
}
\begin{document}
\maketitle

\begin{abstract}
Ensuring both accuracy and robustness in time series prediction is critical to many
applications, ranging from urban planning to pandemic management.
With sufficient training data where all spatiotemporal patterns are well-represented, existing deep-learning models
can make reasonably accurate predictions.
However, existing methods fail when the training data are drawn from different circumstances (e.g., traffic patterns on regular days) compared to test data (e.g., traffic patterns after a natural disaster).
Such challenges are usually classified under domain generalization. In this work, we show that one way to address this challenge in the context of spatiotemporal prediction is by incorporating domain differential equations into Graph Convolutional Networks (GCNs).
We theoretically derive conditions where GCNs incorporating such domain differential equations are robust to mismatched training and testing data compared to baseline domain agnostic models.
To support our theory, we propose two domain-differential-equation-informed networks called 
\underline{R}eaction-\underline{D}iffusion \underline{G}raph \underline{C}onvolutional \underline{N}etwork (RDGCN),
which incorporates differential equations for traffic speed evolution, and \underline{S}usceptible-\underline{I}nfectious-\underline{R}ecovered \underline{G}raph \underline{C}onvolutional \underline{N}etwork (SIRGCN),
which incorporates a disease propagation model.
Both RDGCN and SIRGCN are based on reliable and interpretable domain differential equations that
allows the models to generalize to unseen patterns.
We experimentally show that RDGCN and SIRGCN are more robust with mismatched testing data than the state-of-the-art deep learning methods. 
\end{abstract}

\section{Introduction}
Robustness to domain generalization is a crucial aspect in the realm of time series prediction, which has continued to be a topic of great interest, given its myriad uses in many sectors such as transportation~\cite{bui2022spatial}, weather forecasting~\cite{longa2023graph}, and disease control~\cite{jayatilaka2020use}.
When sufficient training data is available where all patterns likely to appear in test situations are represented, deep learning approaches have provided the most accurate predictions.
Among the best-performing deep learning models,
graph-based deep neural networks~\cite{yu2017spatio,wu2020connecting,han2021dynamic,shang2021discrete,ji2022stden} dominate due to their ability to incorporate spatiotemporal information so that dependent information at different locations and times can be captured and exploited to make more accurate predictions.  
However, when collecting 
representative training data is challenging, as it is in many practical situations
since we can only sample in limited conditions, the model trained on such limited data is expected to work in exceptional circumstances.
For example, in the traffic speed prediction problem, natural disasters (e.g., earthquakes or hurricanes) are rare events where traffic patterns can be significantly and abruptly altered. 
At best extreme situations can be simulated, but these cannot truly capture the patterns in an actual event.
Under these circumstances, existing predictive models, especially those deep learning models trained with a large amount of data that are not representative of the testing circumstance, 
do not work well,
i.e., they are not robust to mismatched patterns.

\begin{figure}
\centering
\includegraphics[width=0.99\columnwidth]{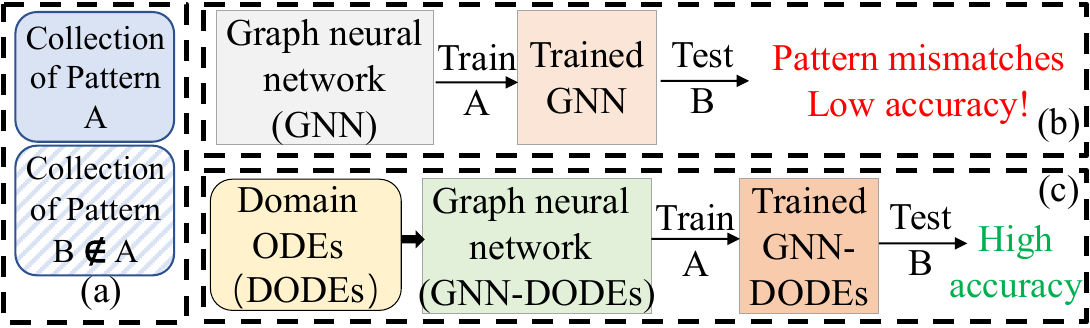} 
\caption{(a) Two collections of patterns (i.e., pattern A exists in a known dataset) and pattern B (difficult to be collected and only available at test time) in the training and test datasets have no overlap; (b) Without incorporating a domain ODE, testing the model with such mismatched patterns may result in poorer accuracy; (c) With an architecture incorporating a domain ODE, 
the model is able to achieve good accuracy given unseen patterns. 
}.

\label{fig:GNNDK-Exp}
\vspace{-0.27in}
\end{figure}

Such challenges are usually classified under domain generalization (Figure~\ref{fig:GNNDK-Exp}), where a model is trained on a source domain but evaluated on a target domain with different characteristics (mismatches).
Consider traffic speed prediction as a motivating example. It is well known that prediction algorithms perform poorly when traffic patterns are unexpectedly disrupted, for instance, due to extreme weather, natural disasters, or even special events. In our evaluation section, we will demonstrate this phenomenon more concretely, where state-of-the-art deep learning methods do not generalize well when dataset patterns are split between training (weekday) and test patterns (weekend). We also provide additional experiments in the appendix to probe further this phenomenon wherein we train a well-known deep learning approach, Spatial-Temporal Graph Convolutional Network (STGCN)~\cite{yu2017spatio}, and use sensitivity analysis
to identify the most influential sensors on the graph that contributed to a particular prediction.
We note that the most influential sensors are geographically nearby when the test dataset is drawn from the same distribution (weekday) as the training data. In contrast, when the test data is drawn from a different distribution (weekend), the three most important sensors identified by the sensitivity analysis are far away. The challenge mentioned above can be formulated as learning with mismatched training data~\cite{9086253},
a problem that is often encountered in practice.

Our work's driving hypothesis is that if domain equations can capture spatiotemporal dynamics that stay consistent even under data mismatches, then incorporating those equations into the learning methodology can lower the generalization discrepancy of the learned model.
Consequently, the proposed model exhibits robustness to such domain generalization.
To that end, we propose an approach to incorporate domain ordinary differential equations (ODE) into Graph Convolutional Networks (GCNs) for spatiotemporal forecasting, study the generalization discrepancy theoretically, and apply our approach to develop novel domain-informed GCNs for two practical applications, namely traffic speed prediction and influenza-like illness (ILI) prediction. Our experimental results demonstrate that even when the patterns are altered between training and test data, the in-built dynamical relationship ensures that the prediction performance is not significantly impacted. 
Furthermore, the prior knowledge encoded by the domain-informed architecture reduces the number of model parameters, thus requiring less training data.
The model computations are better grounded in domain knowledge and are thus more accessible and interpretable to domain experts.
Our contributions are as follows:
\begin{itemize}[leftmargin=*]  \setlength\itemsep{0em}
    \item We study the challenge of graph time series prediction with mismatched data where the patterns in the training set are not representative of that in the test set. 
    \item We prove theoretically the robustness of domain-ODE-informed GCNs to a particular form of domain generalization when the labeling function differs between the source and target domains. Specifically, we show that the generalization discrepancy is lower for the domain-ODE-informed learning model under certain conditions compared to a domain-independent learning model.
    \item We develop two novel domain-ODE-informed neural networks called
\underline{R}eaction-\underline{D}iffusion \underline{G}raph \underline{C}onvolutional \underline{N}etwork (RDGCN), 
and \underline{S}usceptible-\underline{I}nfectious-\underline{R}ecovered \underline{G}raph \underline{C}onvolutional \underline{N}etwork (SIRGCN) that 
augments GCNs with domain ODEs studied in transportation research~\cite{bellocchi2020unraveling} and disease epidemics~\cite{stolerman2015sir}.
\item Through experimental evaluation on real datasets, we demonstrate that our novel domain-informed GCNs are more robust in situations with data mismatches when compared to baseline models in traffic speed prediction and influenza-like illness prediction. 
\end{itemize}

\section{Related Work}
\noindent\textbf{Graph Neural Networks on Time Series Predictions.}
Graph Neural Networks (GNNs) have been widely utilized to enable great progress in dealing with graph-structured data~\cite{kipf2016semi}.
~\cite{yu2017spatio,li2017diffusion,cui2020learning} build spatiotemporal blocks to encode the spatiotemporal features. 
~\cite{wu2020connecting,shang2021discrete,han2021dynamic,velivckovic2017graph,guo2019attention} generate dependency graphs, which only focus on “data-based” dependency wherein features at a vertex can be
influenced by a vertex, not in its physical vicinity. 
None of these approaches exploit domain ODEs for better generalization and robustness.

\noindent\textbf{Domain generalization.} 
Domain generalization is getting increasing attention recently~\cite{wang2022generalizing, zhou2022domain, robey2021model, zhou2021domain}, and robustness to domain data with mismatched patterns~\cite{sun2023reaction} is important in designing trustworthy models ~\cite{9086253}. 
The goal is to learn a model that can generalize to unseen domains. 
Many works~\cite{robey2021model} assume that there exists an underlying transformation between source and target domain, and use an extra model to learn the transformations~\cite{xian2022mismatched},
therefore the training data must be sampled under at least two individual distributions.
However, our approach addresses the challenge 
by incorporating a domain-specific ODE
instead of using extra training processes learning from the data from two individual domains, or employing additional assumptions on transformations,
thus works for arbitrarily domain scenarios.

\noindent\textbf{Domain differential equations and Neural ODEs.}
Time series are modeled using differential equations in many areas such as chemistry~\cite{scholz2015first} and transportation~\cite{van2015genealogy,loder2019understanding,kessels2019traffic}. 
These approaches focus on equations that reflect most essential relationships. 
To incorporate differential equations in machine learning, many deep learning models based on Neural ODEs~\cite{chen2018neural, jia2019neural, asikis2022neural} have been proposed.
Advancements extend to Graph ODE networks~\cite{ji2022stden, choi2022graph, jin2022multivariate}, which employ black-box differential equations to simulate continuous traffic pattern evolution.  
However, the potential of domain knowledge to fortify algorithmic robustness against domain generalization has yet to be explored.

\section{Problem Definition}
\noindent \textbf{Notations.}
Given an unweighted graph $\mathcal{G} = (\mathcal{V}, \mathcal{E})$ with $|\mathcal{V}|=n$ vertices and $|\mathcal{E}|$ edges,
each vertex $i\in\mathcal{V}$ corresponds to a physical location,
and each edge $(i,j)\in \mathcal{E}$ represents the neighboring connectivity between two vertices.
Let $\mathcal{N}_i$ denote the set of neighbors of vertex $i$,
and $\mathcal{A} \in \mathbb{R}^{n \times n}$ denote the adjacency matrix of the graph $\mathcal{G}$. 
The value of the feature at vertices $i$ at time $t$ is denoted $x_i(t)$, and the vector of features at all vertices at time $t$ is denoted $X(t)$.
Let $X_{t_1:t_2} \in \mathbb{R}^{n \times (t_2-t_1)}$ be the sequence of features $X(t_1),X(t_1+1), \cdots, X(t_2)$ at all vertices in the interval $[t_1,t_2]$.
Assume that the training data and test data are sampled from the source domain $\mathcal{X}_s$ and target domain $
\mathcal{X}_\tau$, respectively.
Data from different domains exhibit different patterns, which we explicitly capture through labeling
functions in each domain. Formally
\small
\begin{equation*}
\begin{aligned}
    \mathcal{X}_s & = \{ (X_{t-T:t}, X_{t+1}):X_{t+1} = l_s(X_{t-T:t}), {X_{t-T:t} \sim \mathcal{D}} \},
\end{aligned}
\end{equation*}
\normalsize
where $l_s$ is the labeling function in the source domain and $\mathcal{D}$ is the distribution of inputs.
The target domain $\mathcal{X}_\tau$ can be defined similarly but with a different labeling function $l_\tau$.
Note that $T$ is the length of the time sequence that defines the ``ground truth'' labeling function, which is usually unknown.
We assume that $T$ is identical in the source and target domains.

\noindent \textbf{Problem definition.}
We aim to solve the problem of single domain generalization~\cite{qiao2020learning, wang2021learning, fan2021adversarially}.
Given the past feature observations denoted as $(X_{t-T:t}^s, X_{t+1}^s) \in \mathcal{X}_s$ on the graph $\mathcal{G}$ on only one source domain $s$, we aim to train a predictive hypothesis $h$ that can 
predict the feature at time $t+1$ for all vertices (denoted as $\hat{X}(t+1) \in \mathbb{R}^{n}$) on the unseen target domain $\tau$
\textbf{Without extra training}. 
We use $L$ to denote a loss function to evaluate the distance between the prediction and ground truth.
Let $h$ denote a hypothesis, 
and let $l$ denote the labeling function in the corresponding domain. 
The expectation of the loss is: $\mathcal{L}_{(\mathcal{D}, l)}(h)=\mathbb{E}_{X_{t-T:t} \sim \mathcal{D}}({L}(h(X_{t-T:t}), l(X_{t-T:t})))$.
The hypothesis returned by the learning algorithm is 
\begin{equation}
    h^\ast = \argmin_{h \in \mathcal{H}} \mathcal{L}_{(\mathcal{D}, l_s)}(h),
\end{equation}
where $\mathcal{H}$ is any hypotheses set.
Let $\mathcal{H}^\ast$ denote the set of hypotheses returned by the algorithm, i.e., $\mathcal{H}^* = \{h^\ast: \mathcal{L}_{(\mathcal{D}, l_s)} (h^\ast) < \epsilon\}$, 
and define the discrepancy measure that quantifies the divergence between the source and target domain as~\cite{kuznetsov2016time}:
\begin{equation}
\label{eq:disc}
    disc(\mathcal{H}^*) = \sup_{h\in \mathcal{H}^*} |\mathcal{L}_{(\mathcal{D}, l_s)}(h) - \mathcal{L}_{(\mathcal{D}, l_\tau)}(h)|.
\end{equation}
The objective is to train a hypothesis in the source domain with a low discrepancy to domain generalization. 
We address the challenge by developing GCNs that incorporate domain ODEs, and our methodology is described as follows.

\section{Methodology}
Constructing domain-informed GCNs involves three steps:

\noindent\textbf{\textbullet Define the domain-specific graph.} 
The unweighted graph $\mathcal{G}$ defined earlier should correspond to the real-world network.
Each vertex is associated with a time sequence of data, and edges connect nodes to their neighboring nodes such that the domain equations define the evolution of data at a vertex as a function of the data at 1-hop neighbors.

\noindent\textbf{\textbullet Construct the domain-informed feature encoding function.}
Let $x_i(t)$ denote a feature at vertex $i$ at time $t$ ,and $H^i_{t,T}$ denote the length $T$ history of data prior to time $t$, and set $\mathcal{N}_i$ of 1-hop neighbors of vertex $i$. The ODE models the feature dynamics at vertex $i$ is given by
\begin{equation}
\label{eq:domain-ode}
    \frac{dx_i(t)}{dt} = f_i(x_i(t), \{x_j(t)|j\in \mathcal{N}_i\}) + g_{i}(H^i_{t,T}), 
\end{equation}
where $f_i(x_i(t),\{ x_j(t) | j \in \mathcal{N}_i \})$ models the evolution of feature~\cite{asikis2022neural, xhonneux2020continuous} at vertex $i$ as a dynamic system related only to the feature at vertex $i$ and the neighboring vertices at current time.
Among other things, $f$ encapsulates the invariant physical properties of the network.
For example, in transportation networks, demand patterns might change, but traffic flow dynamics would not. In disease transmission, travel patterns might change, but the dynamics of infection transmission would not.
In Eq. (\ref{eq:domain-ode}), the influence of the temporal patterns on the measurement cannot be captured by the immediate
dynamics is captured by the function $g_i$, which takes the feature history in a $T-$length window as input.
In Eq. (\ref{eq:domain-ode}), $g_i(H^i_{t,T})$ is the function of the feature history in a $T-$length window prior to time $t$ and data from non-adjacent vertices at time $t$.
The pattern-specific function $g_i$ is used to capture some impact of the past data\footnote{E.g., the congestion is caused by the increasing traffic demand. 
} and the impact from distant vertices\footnote{E.g., temporary change of travel demand.}. 
The ODE models immediate dynamics are widely studied in many domains. Thus, the function $f_i$ is usually considered a known function~\cite{maier2019learning}, while the pattern-specific function $g_i$ is difficult to capture, and is generally considered an unknown function. 
Due to an unknown function, most of the existing deep learning models design complex architectures to approximate $g_i$. 
However, in our context, we assume mismatched patterns between domains mentioned earlier lead to the difference between the pattern-specific function $g_i$ in the source and target domain, i.e., let $g_{s, i}$ and $g_{\tau, i}$ denote the pattern-specific function at vertex $i$ in source and target domain respectively. The difference between the labeling function in the respective domain (i.e., $l_s$ and $l_\tau$) is caused by
\small
\begin{equation}
\label{eq:domain-diff}
    g_{s, i}(H^i_{t,T}) \neq g_{\tau, i}(H^i_{t,T}).
\end{equation}
\normalsize
To mitigate the effects of such pattern mismatches in Eq. (\ref{eq:domain-diff}), we propose the GCN incorporating domain ODEs, which is a family of GCNs that incorporate the domain equations $f_i$ to learn only the immediate dynamics to be robust to the domain generalization. 
We employ a feature extraction function, $O$, to encode inputs by selecting the relevant input by utilizing a domain graph:
\begin{equation}
    O(X(t), \mathcal{A}) = \mathcal{A} \otimes X(t),
\end{equation}
where $\otimes$ is the Kronecker product, and $\mathcal{A}$ is the adjacency matrix of graph $\mathcal{G}$.
We then generalize the local domain Eq. (\ref{eq:domain-ode}) to a graph-level representation:
\small
\begin{equation}
    \frac{dX(t)}{dt} = F(O(X(t), \mathcal{A});\Theta_1) + G(O(X(t), \mathbb{I} - \mathcal{A}), X_{t-T:t-1};\Theta_2),
\end{equation}
\normalsize
where $\mathbb{I}$ is the all-one matrix, $F$ (resp. $G$) with parameters $\Theta_1$ (resp. $\Theta_2$) is a collection of $\{f_1, ... f_n\}$ (resp. $\{g_1, ..., g_n \}$) of the encoded domain-specific features. 

\noindent\textbf{\textbullet Define the Network Prediction Function.}
The domain-ODE-informed GCNs only learn $F$. Thus a network-level prediction using the finite difference method is:
\small
\begin{equation}
    \label{eq:ode-prediction}
    \begin{aligned}
        \hat{X}({t+1}) & = X(t) + \int_t^{t+1} F(O(X(t), \mathcal{A});\Theta_1) dt \\
        & \approx X(t) + F(O(X(t), \mathcal{A});\Theta_1).        
    \end{aligned}
\end{equation}
\normalsize

\section{Proof of Robustness to Domain Generalization}
Without incorporating domain ODEs, most GNNs need longer data streams to make accurate predictions. For instance, black-box predictors in the traffic domain
require $12$ time points to predict traffic speeds, whereas the domain informed GCN we develop requires only $1$ time point as it explicitly incorporates the immediate dynamics instead of learning arbitrary functions. (see Eq.(\ref{eq:domain-knowledge})).
We will discuss the application-specific GCNs in the subsequent section. In the following, we will prove that when the underlying dynamics connect the features at consecutive time points, the approach
that incorporates the dynamics is more robust to the domain generalization problem defined by the discrepancy equation in Eq. (\ref{eq:disc}).
Similar to the approach in~\cite{redko2020survey}, we assume the training set is sampled from the source domain, and the test data is sampled from the target domain. 
In this work, we formulate the mismatch problem as a difference between labeling functions in the source and target domains where the immediate time and nearest neighbor dynamics (function $F$) are unchanging across domains. In contrast, the impact of long-term and distant neighbor patterns (function $G$) varies between source and target domains. 
We observe that although both $G_s$ (resp. $G_\tau$) and $F$ utilize $X(t)$ as part of their input, they consistently select features from distinct nodes. Thus there is no overlap between inputs of $G_s$ (resp. $G_\tau$) and $F$.

Under such a mismatch scenario,
we will show that using long-term patterns and data from nodes outside the neighborhood will have worse generalization as measured by a discrepancy function. 
We use $\mathcal{H}_1$ to denote the hypothesis set mentioned earlier that predicts the data at time $t+1$ based on a $T$-length history (from $t-T$ to $t$, where $T>1$), and
$\mathcal{H}_2$ denotes the hypothesis set that uses the data only at time $t$ to predict the speed at $t+1$. In other words, baseline algorithms that use several time points and data from nodes outside the 1-hop neighborhood would fall into $\mathcal{H}_1$. In contrast, algorithms such as ours, which use domain ODEs to incorporate the known functional form $F$, which requires only immediate and nearest neighbor data, would belong to $\mathcal{H}_2$.
We make the following two assumptions:
\begin{assumption}
\label{assump:realizability}
There exists $h_1^* \in \mathcal{H}_1$ s.t. $\mathcal{L}_{(\mathcal{D}, F+G_s)}(h_1^\ast) = 0$. There exists $h_2^* \in \mathcal{H}_2$ s.t. $\mathcal{L}_{(\mathcal{D}, F)}(h_2^\ast) = 0$. 
\end{assumption}
\begin{assumption}
\label{assump-symmetric}
Let $U=G_s(O(X(t), \mathbb{I}-\mathcal{A}), X_{t-T:t-1};\Theta_2)$ be a random variable where ${X_{t-T:t} \sim \mathcal{D}}$ and  $P_U(G)$ be the probability distribution function (PDF) of $U$. The PDF $P_U(G)$ is symmetric about $0$.
\end{assumption}

Assumption~\ref{assump:realizability} ensures the learnability of the hypotheses.
Assumption~\ref{assump-symmetric} ensures that the statistical impact of the long-term pattern is unbiased and symmetric. 
In the appendix, we show the used datasets satisfy these assumptions. 
The above assumptions lead to the following Lemmas about optimal hypotheses learned by domain-agnostic methods, such as the baselines, and those learned by our domain-informed methods, such as ours.
\begin{lemma}
\label{lemma:train-h2}
$h_1^\ast(X_{t-T:t}) = F(O(X(t), \mathcal{A});\Theta_1)+G_s(O(X(t), \mathbb{I} - \mathcal{A}), X_{t-T:t-1};\Theta_2)$.
\end{lemma}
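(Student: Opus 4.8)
The plan is to obtain the identity directly from the realizability part of Assumption~\ref{assump:realizability} together with the standard structural properties of the loss $L$, namely that $L$ is nonnegative and that $L(a,b)=0$ holds exactly when $a=b$. First I would recall that $h_1^\ast$ is, by definition of the learning algorithm, the minimizer $\argmin_{h\in\mathcal{H}_1}\mathcal{L}_{(\mathcal{D},l_s)}(h)$ over the hypothesis set $\mathcal{H}_1$ permitted to use the full $T$-length history, and that in the source domain the labeling function is $l_s=F+G_s$, so that $h_1^\ast=\argmin_{h\in\mathcal{H}_1}\mathcal{L}_{(\mathcal{D},F+G_s)}(h)$.

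Next I would invoke Assumption~\ref{assump:realizability}, which guarantees the existence of some $h\in\mathcal{H}_1$ with $\mathcal{L}_{(\mathcal{D},F+G_s)}(h)=0$. Since the expected loss is bounded below by $0$, this shows that the minimum over $\mathcal{H}_1$ equals $0$ and is attained; consequently the returned hypothesis also satisfies $\mathcal{L}_{(\mathcal{D},F+G_s)}(h_1^\ast)=0$. Because every minimizer achieves this zero value, it does not matter which one the algorithm selects.

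The key step is then to convert this statement about the expectation into a pointwise identity. Writing out $\mathcal{L}_{(\mathcal{D},F+G_s)}(h_1^\ast)=\mathbb{E}_{X_{t-T:t}\sim\mathcal{D}}\big[L(h_1^\ast(X_{t-T:t}),(F+G_s)(X_{t-T:t}))\big]$ and using that the integrand is nonnegative, a vanishing expectation forces $L(h_1^\ast(X_{t-T:t}),(F+G_s)(X_{t-T:t}))=0$ for $\mathcal{D}$-almost every input. Since $L(a,b)=0$ exactly when $a=b$, this yields $h_1^\ast(X_{t-T:t})=(F+G_s)(X_{t-T:t})$ for $\mathcal{D}$-almost every input, which is precisely the claimed identity $h_1^\ast(X_{t-T:t})=F(O(X(t),\mathcal{A});\Theta_1)+G_s(O(X(t),\mathbb{I}-\mathcal{A}),X_{t-T:t-1};\Theta_2)$.

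I expect the only delicate point to be the transition from an expectation to a pointwise (almost-everywhere) identity, which requires making the assumed structure of $L$ explicit — nonnegativity and the identity-of-indiscernibles property enjoyed by, e.g., squared error — and interpreting the conclusion as holding $\mathcal{D}$-almost surely rather than everywhere; this suffices for the downstream discrepancy analysis. Everything else is a direct unwinding of definitions. I would also note that Assumption~\ref{assump-symmetric} plays no role here: it is instead the crucial ingredient for the companion statement about the domain-informed optimizer $h_2^\ast$, where the restricted class $\mathcal{H}_2$ cannot represent the history-dependent term $G_s$, and the symmetry of $P_U$ about $0$ is what makes $F$ the best $\mathcal{H}_2$-approximation of $F+G_s$.
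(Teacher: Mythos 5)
Your proof is correct and takes essentially the same route as the paper's one-line argument: realizability (Assumption~\ref{assump:realizability}) forces $\mathcal{L}_{(\mathcal{D},l_s)}(h_1^\ast)=0$, from which the identity $h_1^\ast = F + G_s$ follows. You merely make explicit the steps the paper leaves implicit --- that the minimizer attains the zero loss guaranteed by realizability, and that passing from a vanishing expectation to a pointwise identity requires nonnegativity and the identity-of-indiscernibles property of $L$ together with a $\mathcal{D}$-almost-everywhere qualification --- which is a helpful clarification rather than a different approach.
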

\begin{proof}
Follows Assumption \ref{lemma:train-h2} when $\mathcal{L}_{(\mathcal{D}, l_s)}(h_1^\ast) = 0$.
\end{proof}

\begin{lemma}
\label{lemma:f=h}
If (1) $h_2$ is trained with data sampled from $\mathcal{X}_s$ such that assumption~\ref{assump-symmetric} is true, (2) the loss function ${L}$ is the L1-norm or MSE,  
then $h_2^\ast = F$.
\end{lemma}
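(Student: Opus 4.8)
The plan is to recast the optimization defining $h_2^\ast$ as a pointwise minimization of the conditional expected loss and then use the symmetry of Assumption~\ref{assump-symmetric} to eliminate the contribution of the long-term/distant-node term. First I would use Lemma~\ref{lemma:train-h2} to write the source labeling function as $l_s = F + U$, where $U = G_s(O(X(t),\mathbb{I}-\mathcal{A}), X_{t-T:t-1};\Theta_2)$ is precisely the random variable of Assumption~\ref{assump-symmetric}. Setting $e = h_2 - F$, which is a function only of the time-$t$ data that a hypothesis in $\mathcal{H}_2$ is permitted to read, the objective for the L1-norm or MSE loss depends on the inputs only through $h_2 - l_s = e - U$, so $\mathcal{L}_{(\mathcal{D}, l_s)}(h_2) = \mathbb{E}_{X_{t-T:t} \sim \mathcal{D}}[\phi(e - U)]$ with $\phi(z) = |z|$ or $\phi(z) = z^2$.

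Next I would invoke the disjoint-input observation stated just before the assumptions: $F$ reads the $1$-hop-neighbor features $O(X(t),\mathcal{A})$, whereas $U = G_s$ reads the complementary current features $O(X(t),\mathbb{I}-\mathcal{A})$ together with the strict history $X_{t-T:t-1}$. Because both $h_2$ and $F$ depend only on inputs disjoint from those generating $U$, the residual $U$ is independent of $e$, and by Assumption~\ref{assump-symmetric} it is symmetric about $0$, hence mean-zero and median-zero. This is the lever that pins the optimum to $F$. For the MSE case I would expand $\mathbb{E}[(e-U)^2] = \mathbb{E}[e^2] - 2\,\mathbb{E}[eU] + \mathbb{E}[U^2]$; independence together with $\mathbb{E}[U]=0$ forces the cross term $\mathbb{E}[eU]$ to vanish, leaving $\mathbb{E}[e^2] + \mathbb{E}[U^2] \ge \mathbb{E}[U^2]$, with equality if and only if $e = 0$ almost surely, i.e.\ $h_2^\ast = F$. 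For the L1 case I would minimize pointwise over each realization of the input of $h_2$: conditional on that input, $U$ is still symmetric about $0$, so its conditional median is $0$ and the minimizer of $c \mapsto \mathbb{E}[|c - U|]$ is $c = 0$, again giving $e = 0$ and $h_2^\ast = F$. Realizability (Assumption~\ref{assump:realizability}) guarantees that $F$ actually lies in $\mathcal{H}_2$, so this minimizer is attained.

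The main obstacle is the independence step. Assumption~\ref{assump-symmetric} only asserts the marginal symmetry of $U$, while both the vanishing of the MSE cross term and the pointwise L1 median argument really require the symmetry (or at least the zero conditional mean/median) of $U$ given the inputs read by $h_2$. I would bridge this gap by arguing that the ``no overlap between the inputs of $G_s$ and $F$'' property, under the input distribution $\mathcal{D}$, makes $U$ statistically independent of $e$, so that conditioning on the input of $h_2$ leaves the law of $U$ — and therefore its symmetry about $0$ — unchanged. If only uncorrelatedness rather than full independence can be established, the MSE conclusion still goes through verbatim, but the L1 argument would then need the conditional-symmetry statement to be assumed explicitly.
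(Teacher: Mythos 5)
Your proposal takes essentially the same route as the paper's own proof: decompose $h_2^\ast = F + e$, reduce the objective to $\mathbb{E}[\phi(e-U)]$ with $U = G_s(\cdot)$, and use Assumption~\ref{assump-symmetric} to conclude the minimizing residual is zero via the median (L1) or mean (MSE) of $U$; the paper merely phrases this as a contradiction rather than a direct minimization. The one substantive difference is that you explicitly flag the need for \emph{conditional} symmetry (or zero conditional mean) of $U$ given the inputs read by $h_2$ --- since the pointwise minimizer of a function-valued $e$ is the conditional, not marginal, median/mean --- a gap the paper's proof silently elides by treating the unconditional median/mean as the optimizer, so your version is, if anything, the more careful of the two.
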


To theoretically establish the enhanced robustness of our approach, we assume the PAC learnability of $\mathcal{H}_1$ and $\mathcal{H}_2$.
In detail, with sufficient data, for every $\epsilon_1, \epsilon_2, \delta \in (0,1)$, if Assumption~\ref{assump:realizability} holds with respect to $\mathcal{H}_1, \mathcal{H}_2$, then when running the learning algorithm using data generated by distribution $\mathcal{D}$ and labeled by $F+G_s$, with the probability of at least $1-\delta$, the hypothesis $h_1^\ast$ is in the set
\begin{equation}
    \begin{aligned}
        \mathcal{H}_1^* = \{h_1^\ast: \mathcal{L}_{(\mathcal{D}, F+G_s)} (h_1^\ast) < \epsilon_{1}\},
    \end{aligned}
\end{equation}
and with the probability of at least $1-\delta$, $h_2^\ast$ is in the set
\begin{equation}
    \begin{aligned}
        \mathcal{H}_2^* = \{h_2^\ast: \mathcal{L}_{(\mathcal{D}, F)} (h_2^\ast) < \epsilon_{2}\}.
    \end{aligned}
\end{equation}
We will now demonstrate that $\mathcal{H}_2^*$ is more robust to the domain generalization than $\mathcal{H}_1^*$ using the discrepancy measure defined in Eq. (\ref{eq:disc}). 
For our theoretical result, we require that the loss function $L(h,l)$ satisfy triangle inequality:
\begin{equation}
\label{eq:triangle-equality}
    |L(h, h') - L(h', l)| \leq L(h, l) \leq L(h, h') + L(h', l),
\end{equation}
where $h'$ is any other hypothesis. 
The following theorem proves our result.
\begin{theorem}
\label{theorem:robust}
If (1) the training data is sampled from the source domain where assumption \ref{assump-symmetric} is true, (2) the loss function ${L}{(h, l)}$ obeys the triangular inequality, 
then the discrepancy should satisfy
\begin{equation}
    disc(\mathcal{H}_2^\ast) \leq disc(\mathcal{H}_1^\ast).
\end{equation}
\end{theorem}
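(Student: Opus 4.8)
The plan is to reduce the set-level discrepancy of each hypothesis class to its source-optimal member, using Lemmas~\ref{lemma:train-h2} and~\ref{lemma:f=h}, and then compare the two quantities through the loss triangle inequality Eq.~(\ref{eq:triangle-equality}). Write the labeling functions as $l_s = F + G_s$ and $l_\tau = F + G_\tau$, so that the invariant part $F$ is shared and only the pattern term $G$ differs across domains. Lemma~\ref{lemma:train-h2} identifies the optimal source hypothesis in $\mathcal{H}_1$ as $h_1^\ast = F + G_s = l_s$, i.e.\ $l_s$ itself lies in $\mathcal{H}_1^\ast$; Lemma~\ref{lemma:f=h}, whose validity rests on the symmetry Assumption~\ref{assump-symmetric}, identifies the optimal source hypothesis in $\mathcal{H}_2$ as $h_2^\ast = F$, the domain-invariant part alone. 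Assumption~\ref{assump:realizability} guarantees both optima are genuine members of their $\epsilon$-balls, so in the realizable regime the two discrepancies are governed by $l_s$ and $F$ respectively.

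First I would prove a universal upper bound valid for \emph{every} hypothesis. Applying the right half of Eq.~(\ref{eq:triangle-equality}) twice, once with $(h',l)=(l_\tau,l_s)$ and once with $(l_s,l_\tau)$, and using the symmetry $L(l_s,l_\tau)=L(l_\tau,l_s)$ of the L1/MSE loss, yields the pointwise estimate $|L(h,l_s)-L(h,l_\tau)|\le L(l_s,l_\tau)$. Taking expectations over $X_{t-T:t}\sim\mathcal{D}$ and using $|\mathbb{E}[A]-\mathbb{E}[B]|\le\mathbb{E}|A-B|$ gives
\begin{equation*}
    |\mathcal{L}_{(\mathcal{D}, l_s)}(h) - \mathcal{L}_{(\mathcal{D}, l_\tau)}(h)| \le \mathbb{E}_{X_{t-T:t}\sim\mathcal{D}}[L(l_s,l_\tau)]
\end{equation*}
for all $h$, and therefore $disc(\mathcal{H}_2^\ast)\le\mathbb{E}[L(l_s,l_\tau)]$ after taking the supremum over $\mathcal{H}_2^\ast$.

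Next I would show $\mathcal{H}_1^\ast$ saturates this bound. Since $h_1^\ast=l_s\in\mathcal{H}_1^\ast$ with $\mathcal{L}_{(\mathcal{D}, l_s)}(l_s)=0$ and $\mathcal{L}_{(\mathcal{D}, l_\tau)}(l_s)=\mathbb{E}[L(l_s,l_\tau)]$, the supremum defining $disc(\mathcal{H}_1^\ast)$ is at least $\mathbb{E}[L(l_s,l_\tau)]$; combined with the universal upper bound this forces $disc(\mathcal{H}_1^\ast)=\mathbb{E}[L(l_s,l_\tau)]$, whence
\begin{equation*}
    disc(\mathcal{H}_2^\ast) \le \mathbb{E}[L(l_s,l_\tau)] = disc(\mathcal{H}_1^\ast).
\end{equation*}
As a sharper alternative that makes the gain transparent, I would instead substitute $h_2^\ast=F$ from Lemma~\ref{lemma:f=h}: because $F$ cancels the invariant part, $L(F,l_s)$ and $L(F,l_\tau)$ collapse to the penalties on $G_s$ and $G_\tau$ (for L1, $|G_s|$ and $|G_\tau|$), and the reverse triangle inequality $\big||G_s|-|G_\tau|\big|\le|G_s-G_\tau|=L(l_s,l_\tau)$ recovers the same comparison while exhibiting $disc(\mathcal{H}_2^\ast)$ as a magnitude difference rather than the full difference $\mathbb{E}[L(G_s,G_\tau)]$.

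The step I expect to be the main obstacle is handling the supremum over the whole $\epsilon$-ball rather than only the optima: the upper bound on $disc(\mathcal{H}_2^\ast)$ must hold for every member of $\mathcal{H}_2^\ast$, which is exactly why I route it through the universal triangle-inequality bound instead of evaluating only at $h_2^\ast$, and the equality $disc(\mathcal{H}_1^\ast)=\mathbb{E}[L(l_s,l_\tau)]$ relies jointly on $l_s\in\mathcal{H}_1^\ast$ and that same universal ceiling. A secondary but essential subtlety is the role of Assumption~\ref{assump-symmetric}: it is what underwrites Lemma~\ref{lemma:f=h} and hence the structural fact that $\mathcal{H}_2$ cannot absorb any part of the domain-specific $G_s$; without symmetry the source-optimal $h_2^\ast$ could fit a portion of $G_s$, breaking the clean reduction to $F$ and the resulting strict advantage in discrepancy.
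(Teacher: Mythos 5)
Your proposal is correct and follows essentially the same route as the paper's proof: a universal triangle-inequality (plus Jensen) upper bound of $\mathbb{E}[L(G_s,G_\tau)]$ on the discrepancy of any hypothesis set, combined with the realizability of $h_1^\ast=F+G_s$ to show $\mathcal{H}_1^\ast$ attains this bound exactly, hence $disc(\mathcal{H}_2^\ast)\leq \mathbb{E}[L(G_s,G_\tau)]=disc(\mathcal{H}_1^\ast)$. Your observation that the main argument never actually needs Lemma~\ref{lemma:f=h} (only the universal ceiling and the saturation by $l_s$) matches the structure of the paper's proof, which likewise invokes only Assumption~\ref{assump:realizability} and the triangle inequality.
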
 

Theorem~\ref{theorem:robust} illustrates that models trained using lengthy time sequences and distant nodes are not reliable when there are mismatches between 
the labeling functions in the source domain and target domain.
Loss functions that include mean absolute error and root mean squared error satisfy the triangle inequality assumption. We note that this assumption is not satisfied by the mean-squared error (MSE) loss function. In the Appendix, we prove an extension of this theorem for MSE by making an additional assumption on the data.

\section{Application of Domain-ODE informed GCNs}

In the following part of this section, we will use the Reaction-Diffusion equation and SIR-network differential equation as examples to develop practical domain-informed GCNs. 

\noindent\textbf{Reaction Diffusion GCN for Traffic Speed Prediction.}
Bellocchi in ~\cite{bellocchi2020unraveling} proposed the reaction-diffusion approach to reproduce traffic measurements such as speed and congestion using few observations. The domain differential equations included a \textit{Diffusion} term that tracks the influence in the direction of a road segment, while the \textit{Reaction} term captures the influence opposite to the road direction.
Since each sensor is placed on one side of a road segment and measures the speed along that specific direction,
$\mathcal{A}$ is asymmetric, and in particular, only one of $\mathcal{A}_{i,j}$ and $\mathcal{A}_{j,i}$ can be non zero. 
Consider sensor $i$, let $\mathcal{N}_i^d$ denote the set of sensor $i$'s neighbors in the road segment direction, and 
let $\mathcal{N}_i^r$ denote the set of the neighbors in the opposite direction of the sensor $i$. 
If $x_i(t)$ denotes the speed observed at node $i$ at time $t$, the local reaction-diffusion equation\footnote{See the appendix
for more details.} at vertex $i$ can be formulated as
\small
\begin{equation}\label{eq:domain-knowledge}
\begin{aligned}
     \frac{d x_i(t)}{d t} = & \sum_{j\in \mathcal{N}^d} \rho_{(i, j)} \left(x_{j}(t) - x_{i}(t)\right) + b^{d}_{i}
   \\ + & \tanh \left(\sum_{j\in \mathcal{N}^r} \sigma_{(i, j)} (x_{j}(t) - x_{i}(t)) + b^{r}_{i} \right), 
\end{aligned}
\end{equation}
\normalsize
where $\rho_{(i, j)}$ and $\sigma_{(i, j)}$ are the diffusion parameter and reaction parameter, respectively; 
$b^{d}_{i}$ and $b^{r}_{i}$ are biases to correct the average traffic speed at vertex $i$ in diffusion and reaction.

In the following, we incorporate this reaction-diffusion (RD) equation using the steps outlined in Section Methodology to build a novel GCN model for domain-informed prediction of traffic speed.

\textbf{Step 1: Define reaction and diffusion parameters.}
We define a diffusion graph $\mathcal{G}^d = (\mathcal{V}, \mathcal{E}^d)$ and a reaction graph $\mathcal{G}^r = (\mathcal{V}, \mathcal{E}^r)$ derived from the physical graph $\mathcal{G}$. The diffusion graph represents whether two vertices are direct neighbors in the road direction, i.e., $\mathcal{E}^d = \mathcal{E}$ and $\mathcal{A}^d = \mathcal{A}$; the reaction graph represents whether two vertices are direct neighbors in the opposite direction of a road segment, i.e., $\mathcal{E}^r = \{ (i,j): (j,i) \in \mathcal{E} \}$ and $\mathcal{A}^r = \mathcal{A}^\top$, where $\top$ denotes matrix transpose.
Define $\mathbf{\rho} = \{\rho_{(i,j)} \in \mathbb{R} | {(i,j)}\in\mathcal{E}^d\}$, $\mathbf{\sigma} = \{\sigma_{(i,j)} \in \mathbb{R} | {(i,j)}\in\mathcal{E}^r\}$, $b^d \in \mathbb{R}^{n} $, $ b^r \in \mathbb{R}^{n}$. 
Each parameter $\rho_{(i,j)}$ (resp. $\sigma_{(i,j)}$) is a diffusion weight (resp. reaction weight) for edge $(i,j)$. Each parameter in $\mathbf{\rho}$ and $\mathbf{\sigma}$ corresponds to a directed edge $(i,j)$ in $\mathcal{E}^d$ and $\mathcal{E}^r$, respectively. 
$\mathbf{W}^d \in \mathbb{R}^{n \times n}$ 
is a sparse weight matrix for diffusion graph $\mathcal{G}^d$, 
where $\mathbf{W}_{i,j}^d=\rho_{(i,j)}, \forall (i,j) \in \mathcal{E}^d$, 
otherwise $\mathbf{W}^d_{i,j}=0$.
$\mathbf{W}^r$ for reaction graph $\mathcal{G}^r$ is defined in a similar way, but the non-zero element at $(i,j) \in \mathcal{E}^r$ is $\sigma_{(i,j)}$.




\textbf{Step 2: Construct RD feature encoding function.}
Let $\mathbf{L}^d$ (resp. $\mathbf{L}^r$) be the corresponding Laplacian of the combination of diffusion (resp. reaction) weight tensor $\mathbf{W}^d$ (resp. $\mathbf{W}^r$) and diffusion (resp. reaction) adjacency matrices $\mathcal{A}^d$ (resp. $\mathcal{A}^r$),
then
\small
\begin{equation}
\label{eq:linear-laplacian-reaction-diffusion}
    \begin{aligned}
    (\mathbf{L}^d X(t))_i & = \sum_{(i,j) \in \mathcal{E}^d} (  \mathbf{W}^d \odot \mathcal{A}^d)_{i,j} (X_j(t) - X_i(t)) \\
    & = ((Degree(\mathbf{W}^d \odot \mathcal{A}^d)-\mathbf{W}^d \odot \mathcal{A}^d)X(t))_i,  
    \end{aligned}
\end{equation}
\normalsize
where $\odot$ denotes the Hadamard product, $Degree(*)$ is to calculate the degree matrix of an input adjacency matrix, and $(\mathbf{L}^r X_t)_i$ represents a similar reaction process but the weight tensor is $\mathbf{W}^r$ and Adjacency matrix is $\mathcal{A}^r$.
Specifically, the reaction and diffusion laplacian $\mathbf{L}^r$ and $\mathbf{L}^d$ is the RD-informed feature encoding function $O$ extracting speed differences between neighboring vertices.  

\textbf{Step 3: }
Using Eq~(\ref{eq:ode-prediction}) we can define a prediction:
\small
\begin{equation}
\label{eq:network-prediction}
    \hat{X}({t+1}) = X(t) + (\mathbf{L}^d X(t) + b^{d}) + \tanh{(\mathbf{L}^r X(t) + b^{r})},
\end{equation}
\normalsize

where $\mathbf{L}^d$ and $\mathbf{L}^r$ is the reaction and diffusion functions constructed earlier, 
corresponds to the function $F=(\mathbf{L}^d X_t + b^{d}) + \tanh{(\mathbf{L}^r X_t + b^{r})}$ predicting the traffic speed using
the reaction parameters $\rho$ and the diffusion parameters $\sigma$.

\textbf{Susceptible-Infected-Recovered (SIR)-GCN for Infectious disease prediction.}
The SIR model is a typical model describing the temporal dynamics of an infectious disease by dividing the population into three categories: Susceptible to the disease, Infectious, and Recovered with immunity. The SIR model is widely used in the study of diseases such as influenza and Covid~\cite{cooper2020sir}. 
Our approach is based on the SIR-Network Model proposed to model the spread of Dengue Fever~\cite{stolerman2015sir}, which we describe as follows.
Let ${S}_i(t)$, ${I}_i(t)$, ${R}_i(t)$ denote the number of Susceptible, Infectious, and Recovered at vertex $i \in \mathcal{V}$ at time $t$ respectively and the total population at vertex $i$ is assumed to be a constant, i.e., $N_i = S_i(t) + I_i(t) + R_i(t)$.

\textbf{Step 1: Define the travel matrices.} The spread of infection between nodes is modeled using sparse travel matrices $\mathbf{\Phi} \in [0, 1]^{n \times n}$ as
$\mathbf{\phi}{(i,j)}, \forall (i,j) \in \mathcal{E}^d$; otherwise $\mathbf{\phi}{(i,j)}=0$,
where  $\phi_{(i,j)} \in [0, 1]$ is a parameter representing the fraction of resident population travel from $i$ to $j$,
therefore we require the fractions satisfy $\sum_{k=1}^n \phi_{(i,j)} = 1, \forall i \in \mathcal{V}.$
The SIR-network model at vertex $i$ is defined as:
\small
\begin{equation}
\label{eq:network-level}
\begin{aligned}
    \frac{d{S}_i(t)}{dt} = & -\sum_{j=1}^{M}\sum_{k=1}^{M} \beta_j \phi_{(i,j)}S_i(t)\frac{\phi_{(k, j)}I_k(t)}{N_j^p}, \, \\
    \frac{d{I}_i(t)}{dt} = & \sum_{j=1}^{M}\sum_{k=1}^{M} \beta_j \phi_{(i,j)}S_i(t)\frac{\phi_{(k, j)}I_k(t)}{N_j^p} - \gamma I_i, \, \\
    \frac{d{R}_i(t)}{dt} = & \gamma I_i(t),
\end{aligned}
\end{equation}
\normalsize
where $\beta_i$ is the infection rate at vertex $i$, representing the probability that a susceptible population is infected at vertex $i$,  and $\gamma$ is the recovery rate, representing the probability that an infected population is recovered, $N_i^p = \sum_{k=1}^n \phi_{(i,j)} N_k$ is the total population travel from all vertices to vertex $i$. We assume the recovery rates at all vertices are the same. 

\textbf{Step 2: Construct the SIR function.} The differential equation system~(\ref{eq:network-level}) is equivalent to:
\small
\begin{equation}
\label{eq:sirgnn-prediction}
    \frac{dI(t)}{dt} = (\mathcal{K} - \gamma) I({t}),
\end{equation}
\normalsize
where $I(t)$ is the feature ($X(t)$ mentioned earlier) representing the number of Infectious people. Then the transformation matrix $\mathcal{K}$ connecting $I(t)$ and $I(t+1)$ at neighboring time is:
\small
\begin{equation}
\label{eq:K}
    \mathcal{K}_{i,j} = \sum_{j=1}^n \beta_j \phi_{(i,j)}\phi_{(k,j)} \frac{S_i}{N_j^p},
\end{equation}
\normalsize
where $S_i(t) = N_i - I_i(t) - R_i(t)$ and $R_i(t) = \gamma \int_{t_0}^{t} I(t) = \gamma \sum_{t_0}^{t} I(t)$, $t_0$ is the starting time of the current epidemic.
The domain-informed feature encoding function $O$ is utilized to approximate the counts of susceptible and recovered populations and estimate the infectious people likely to travel, approximated by the flight data.

\textbf{Step 3: }
Using Eq.~(\ref{eq:ode-prediction}), (\ref{eq:sirgnn-prediction}), and (\ref{eq:K}), the prediction is
\small
\begin{equation}
    \hat{I}({t+1}) = {I}(t) + (\mathcal{K} - \gamma) I(t).
\end{equation}
\normalsize




\section{Evaluation}
In this section, we compare the performance of these domain-ODE-informed GCNs with baselines when tested with mismatched data and demonstrate that our approach is more robust to such mismatched scenarios.

\begin{figure*}[!h] 
    \centering
    \vspace{-.03in}
    \subfloat[Baseline models and RDGCN trained on $12$ consecutive weekdays and all models are augmented by MAML]
    {\label{fig:mismatch}
    \hspace{-.0in}\includegraphics[width=0.95\textwidth]{{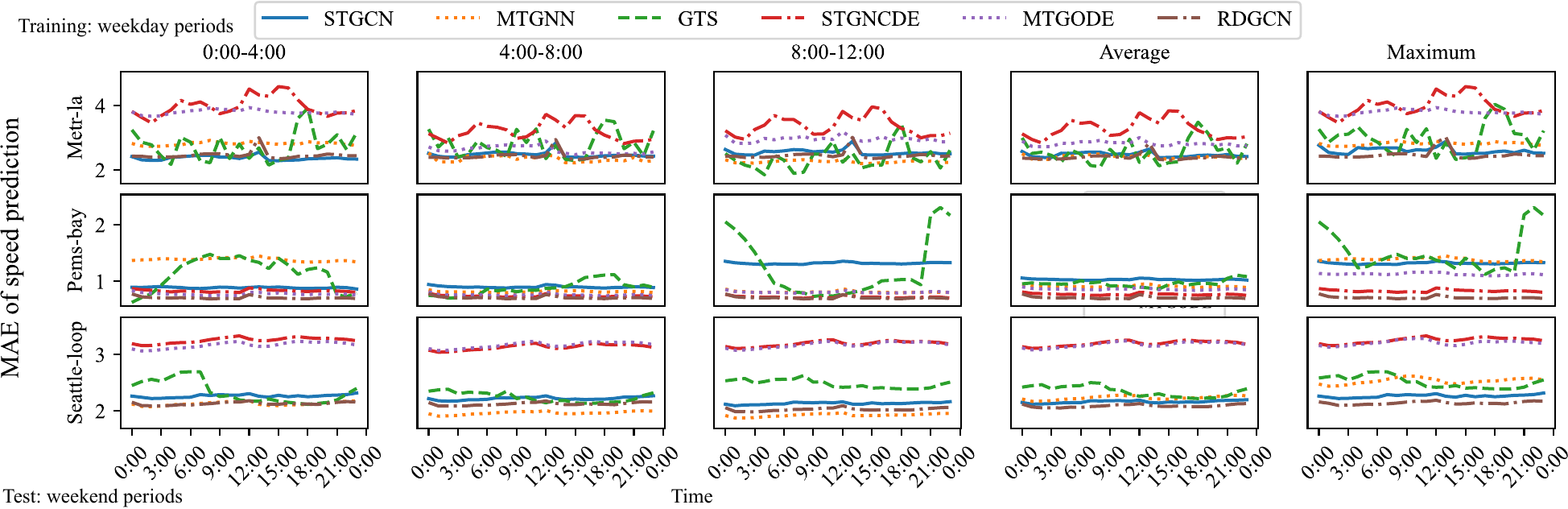}}}
    \hfill%
    \subfloat[Baseline models and RDGCN trained on more than half a year of weekdays]
    {\label{fig:mismatch-maml}
    \vspace{-0.5in}
    \hspace{-.03in}\includegraphics[width=0.95\textwidth, 
    ]{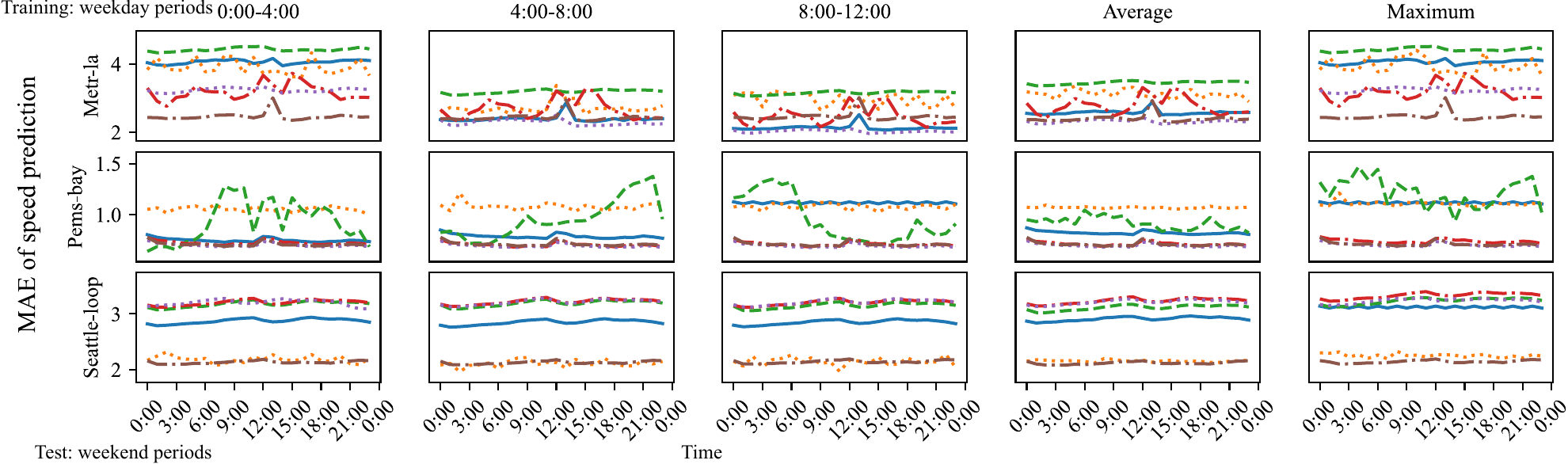}}
    \vspace{-.08in}
    \caption{
(a) The results of RDGCN are very close regardless of the period of the training set. 
(b) Even though all the models are trained using all available weekdays, the results of RDGCN are still closer regardless of the period, compared to baseline models.
The numerical result, the plot in the other three time windows, and the corresponding result for RMSE are in Ablation Study in the appendix.    }
\vspace{-.08in}
\label{fig:mismatch-all}
\end{figure*}

\subsection{Experiment Settings}
\noindent\textbf{Datasets.}
Our experiments are conducted on three real-world datasets (Metra-la, Pems-bay and Seattle-loop) for traffic prediction, and two real-world datasets (in Japan and US) for disease prediction. The details are shown in Table \ref{tab:dataset_stats}.

\begin{table}[htbp]
    \centering
    \small
        \caption{Dataset statistics}
        \vspace{-0.1in}
\resizebox{0.99\columnwidth}{!}{
    \begin{tabular}{c|c c c c c}
    \toprule
        Dataset & $|\mathcal{V}|$ & $|\mathcal{E}|$ & resolution & period  \\
    \midrule
        Metr-la \cite{jagadish2014big} & 207 & 233 & 5 mins & 122 days  \\
        Pems-bay \cite{li2017diffusion} & 281 & 315 & 5 mins & 151 days  \\
        Seattle-loop \cite{cui2020learning} & 323 & 660 & 5 mins & 365 days \\
        Japan-Prefectures \cite{deng2020cola} & 47 & 133 & weekly & 347 weeks  \\
        US-States \cite{deng2020cola} & 49 & 152 & weekly & 834 weeks  \\
    \bottomrule
    \end{tabular}
    }
    \label{tab:dataset_stats}
\end{table}
\vspace{-0.1in}

\noindent\textbf{Evaluation Metric.}
The loss function we use is the mean absolute error and the root mean square error:
    $MAE(X(t), \hat{X}(t)) = \frac{1}{n} \sum_{i=1}^{n} |x_i(t) - \hat{x}_i(t)|$,     $RMSE(X(t), \hat{X}(t)) = (\frac{1}{n} \sum_{i=1}^{n} (x_i(t) - \hat{x}_i(t)))^2)^{\frac{1}{2}}$.
We also use MAE and RMSE to evaluate models. 
We note that both these loss functions satisfy the triangle inequality.


\noindent\textbf{Baselines.}
For traffic prediction tasks,
we compare RDGCN with STGCN~\cite{yu2017spatio}, MTGNN~\cite{wu2020connecting}, GTS~\cite{shang2021discrete}, STGNCDE~\cite{choi2022graph}, and MTGODE~\cite{jin2022multivariate}.
They are influential and best-performing deep learning models for predicting traffic speed using historical speed alone.
We also use Model-Agnostic Meta-Learning (MAML)~\cite{finn2017model} to help baseline models, and our approach adapts quickly to tasks using good initial weights generated by MAML. 
For disease prediction, we compare SIRGCN with two state-of-the-art models for infection prediction, ColaGNN~\cite{deng2020cola} and EpiGNN~\cite{xie2023epignn}.


\subsection{Results and analysis} \label{sec:result}
\textbf{Mismatched Data Experiments for RDGCN.}
We first explore the performance of the models when they are trained using mismatched data from certain conditions and tested using alternate, mismatched conditions.
Specifically, 
the models are trained with four-hour data on weekdays (e.g.,8:00-12:00 on weekdays) selected and evaluated with hourly data
on weekends (e.g., 13:00-14:00 on weekends). 
In limited data and mismatched conditions (Figure \ref{fig:mismatch}), the training
set consists of data from five different sequences of 12 consecutive
weekdays selected randomly from the available data. 
This experiment aims to replicate scenarios where data collection is challenging, and traffic patterns undergo rapid changes.
In mismatched conditions without data limitations (Figure \ref{fig:mismatch-maml}), the training set consists of data from all available weekdays.
This captures instances where data collection is comparatively less arduous, although the traffic pattern retains the potential to shift swiftly.
The results are shown in Figure~\ref{fig:mismatch-all}, where each curve in the first three columns denotes the average test prediction MAE of models (resp. the average test prediction MSE of models in the appendix).
We generate two summary figures illustrating the average and maximum MAE across all six training sets.
In Figure~\ref{fig:mismatch}, we compare the performance of our approach with the STGCN, MTGNN, GTS, MTGODE, STGNCDE, and RDGCN in the mismatched data,
when the training process is augmented with MAML.
Figure~\ref{fig:mismatch-maml} plots the 
prediction MAE of baseline models and RDGCN over time, given all available weekday data.

In Figure~\ref{fig:mismatch}, 
all RDGCN models have nearly identical performance regardless of which time window of data is used for training.
The MAE of all the RDGCN
models is uniformly low (i.e., small y-axis values), and there is very low variance in performance across RDGCN models
trained with different time windows (i.e., the curves of average MAE is close to the curves of maximum MAE).
However, the performance of baseline models are significantly different depending on the training set, and some can have a relatively high MAE (e.g., the curve of STGCN on Pems-bay dataset has much higher MAE values than the one of RDGCN over time).
From Figure~\ref{fig:mismatch-maml}, we can see that even when the model is trained using all available weekday data, RDGCN outperforms
the baseline models wherein the variance across time, and across models is very low. While more data brings some
gain to baseline models, its impact on RDGCN is fairly limited, indicating that RDGCN performs well in different
testing domains without needing additional training data.

These test results support our hypothesis that incorporating traffic dynamics into the learning model makes it more robust to this kind of domain generalization (data from mismatched training and testing conditions).
We speculate that this is a consequence of our model capturing the relative changes in speed through the dynamical equations, whereas existing baseline models are black box models that derive complex functions of the absolute speed values across time. In effect, when there is a mismatch, the underlying nature of traffic dynamics is less likely to be impacted, whereas the complex patterns of absolute speed values might vary significantly across domains. This is particularly true when dealing with limited data that does not contain all possible patterns.
At the same time, RDGCN is designed to predict based on neighboring vertices, so even if the speed patterns of a distant sensor and a close sensor are similar (e.g., both are free flow), the model uses close sensors to make predictions.
We note that the prediction of RDGCN is not uniformly better than the prediction of baselines {(e.g., the prediction of MTGNN trained by Seattle weekday data from 8:00 to 12:00 is better than the prediction of RDGCN)}, and one possible reason is that speed pattern mismatches between weekdays and weekends are not always significant (e.g., when the training weekday is a holiday).
Furthermore, the predictions of MTGNN and MTGODE exhibit a slight superiority over RDGCN in Metr-la dataset in certain windows. Our conjecture is that the mix-hop layers enable these models to assign higher significance to learn short-term patterns, which likely does not change much between the training and test data. Although real-world data under situations such as disasters or events are hard to obtain, our approach of
splitting the dataset emulates test scenarios that are sufficiently different from the training dataset to demonstrate the robustness of our approach.
\begin{table}[htbp]
\centering
\vspace{-0.1in}
  \caption{Evaluation of models under mismatched data}
  \vspace{-0.1in}
\resizebox{0.85\columnwidth}{!}{
    \begin{tabular}{ccccc}
\midrule

 & {Dataset} & ColaGNN & EpiGNN  & SIRGCN \\ \midrule
 \multirow{2}{*}{MAE} & Japan-Prefectures & 356 $\pm$ 21  & 466 $\pm$ 24 & \textbf{342} $\pm$ \textbf{22}  \\
 & US-States & 46 $\pm$ 3 & 66 $\pm$ 6  & \textbf{41} $\pm$ \textbf{4}  \\ \midrule
 \multirow{2}{*}{RMSE} & Japan-Prefectures &  901 $\pm$ 53 & 922 $\pm$ 69 & \textbf{863} $\pm$ \textbf{44} \\
 & US-States & 130 $\pm$ 12 & 178 $\pm$ 16 & \textbf{121} $\pm$ \textbf{10} \\
\midrule
\end{tabular}%
\vspace{-0.5in}
}
\label{tab:sir-metrices}
\end{table}




\textbf{Mismatched Data Experiments for SIRGCN.} We explore the performance of SIRGCN under mismatched situations. 
Since infection spread and travel patterns vary from season to season, we train our model and the baseline models with ILI data recorded in Summer and Winter, and test the predictions on data in Spring and Fall. 
The result is shown in Table~\ref{tab:sir-metrices}, where each element denotes the MAE and MSE under different seasons. 

The results demonstrate that SIRGCN performs consistently well under the mismatched data scenario with low MAE and RMSE compared to the baseline models. Although SIRGCN does not significantly outperform the deep-learning-based ColaGNN model, we note that SIRGCN makes predictions using only the latest observation at $1$ time point augmented by approximating the total susceptible and recovered populations as specified by the domain equations, whereas the baselines which consider the disease propagation as a black-box model, require more than 7 years data to train, and twenty weeks worth data to make their predictions. 

The two datasets are used for testing, but the theory can also apply to other applications, such as air quality forecasting, molecular simulation, and others where there are underlying graphical models and the domain ODE is well developed.
Overall these evaluations validate the
the main hypothesis of this paper where integrating domain differential equations into GCN allows for better
robustness.

\vspace{-0.1in}
\section{Conclusion}
In this paper, we investigate the challenging problem of graph time series prediction when training and test data are drawn from different or mismatched scenarios. 
To address the challenge, we proposed a methodological approach to integrate domain differential equations in graph convolutional networks to capture the common data behavior across data distributions. We theoretically justify the robustness of this approach under certain conditions on the underlying domain and data. 
By operationalizing our approach, we gave rise to two novel domain-informed GCNs: RDGCN and SIRGCN. These architectures fuse traffic speed reaction-diffusion equations, and Susceptible-Infected-Recovered infectious disease spread equations, respectively. Through rigorous numerical evaluation, we demonstrate the robustness of our models in mismatched data scenarios. The findings showcased in this work underscore the transformative potential of domain-ODE-informed models as a burgeoning category within the domain of graph neural networks. This framework paves the way for future exploration in addressing the challenges of domain generalization in other contexts.

\section{Acknowledgements}
This work was partly funded through a Lehigh internal Accelerator Grant, the CCF-1617889 and the IIS-1909879 grants from the National Science Foundation.
Sihong was supported in part by the Education Bureau of Guangzhou Municipality and the Guangzhou-HKUST (GZ) Joint Funding Program (Grant No.2023A03J0008).
Rick S. Blum was supported by the U.S. Office of Naval Research under Grant
N00014-22-1-2626.

\bibliography{aaai24}

\clearpage
\appendix

\section*{APPENDIX}

\section{Proofs}\label{subsec:proof}


\subsection{Proof of Lemma 2}
\begin{customlemma}{2}
If (1) $h_2$ is trained with data sampled from $\mathcal{X}_s$ such that assumption~\ref{assump-symmetric} is true, (2) the loss function ${L}$ is the L1-norm or MSE,  
then $h_2^\ast = F$.
\end{customlemma}

\begin{proof}
We prove this by contradiction. If $h_2^\ast \neq F$, there must exist $\hat{h}_2^\ast(X_t) \neq 0$ such that $h_2^\ast(X_t) = F(O(X(t), \mathcal{A});\Theta_1) + \hat{h}_2^\ast(X_t)$ and $\hat{h}_2^\ast$ minimizes the following expectation:
\small
\begin{equation}
\label{eq:lemma2-a.1}
\begin{aligned}
    \hat{h}_2^\ast & = \min_{\hat{h}_2 : h_2 - F} \mathbb{E}_{X_{t-T:t} \sim \mathcal{D}}[{L}(\hat{h}_2(X(t)), \\
    & \qquad \qquad \qquad G_s(O(X(t), \mathbb{I}-\mathcal{A}), X_{t-T:t-1};\Theta_2))].    
\end{aligned}
\end{equation}
\normalsize

If the loss function is the L1-norm, Problem~(\ref{eq:lemma2-a.1}) is minimized when $\hat{h}_2^\ast({X}_t)$ equals the median of $G_s(O(X(t), \mathbb{I}-\mathcal{A}), X_{t-T:t-1};\Theta_2)$. 
Assumption \ref{assump-symmetric} in Section 5 implies
\small
\begin{equation}
\label{eq:assump-symmetric-implies}
    Median_{X_{t-T:t} \sim \mathcal{D}} [G] = \mathbb{E}_{X_{t-T:t} \sim \mathcal{D}} [U]=0.
\end{equation}
\normalsize
Thus $\hat{h}_2^\ast({X_t})=0$ is the optimal solution of Problem~(\ref{eq:lemma2-a.1}),
which contradicts the fact that $\hat{h}_2^\ast({X_t}) \neq 0$. \\
If the loss function is the MSE, 
there must exist $\hat{h}_2(X_t) \neq 0$ such that $h_2^\ast(X_t) = F(O(X(t), \mathcal{A});\Theta_1)$ and $\hat{h}_2(X_t)$ minimize the following expectation
\small
\begin{equation}
\label{eq:contradict-mse}
    \mathbb{E}_{X_{t-T:t} \sim \mathcal{D}} [(\hat{h}_2(X_t) - G_s(O(X(t), \mathbb{I}-\mathcal{A}), X_{t-T:t-1};\Theta_2))^2].
\end{equation}
\normalsize
Since Assumption \ref{assump-symmetric} in Section 5 implies 
\small
\begin{equation}
    \mathbb{E}_{X_{t-T:t} \sim \mathcal{D}}[G_s(O(X(t), \mathbb{I}-\mathcal{A}), X_{t-T:t-1};\Theta_2)] = 0,
\end{equation}
\normalsize
the expectation $\mathbb{E}_{X_{t-T:t} \sim \mathcal{D}} [(\hat{h}_2(X_t) - G_s(O(X(t), \mathbb{I}-\mathcal{A}), X_{t-T:t-1};\Theta_2))^2]$ is minimized when the derivative $2(\hat{h}_2(X(t)) - \mathbb{E}_{X_{t-T:t} \sim \mathcal{D}} [G_s(O(X(t), \mathbb{I}-\mathcal{A}), X_{t-T:t-1};\Theta_2)]) = 0$, hence $\hat{h}_2({X_t})=0$ must minimize the expectation in Eq. (\ref{eq:contradict-mse}), which contradicts the fact that $\hat{h}_2({X_t}) \neq 0$. Therefore $h_2^\ast = F$.
\end{proof}
\subsection{Proof of Theorem 3}
\begin{customthm}{3}
If (1) the training data is sampled from the source domain where assumption \ref{assump-symmetric} is true, (2) the loss function ${L}{(h, l)}$ obeys the triangular equality, 
then the discrepancy with any triangular equality loss should satisfy
\begin{equation}
    disc(\mathcal{H}_2^\ast) \leq disc(\mathcal{H}_1^\ast).
\end{equation}    
\end{customthm}

\begin{proof}
By the definition of discrepancy in Eq.(\ref{eq:disc}), we know
\small
\begin{equation}
\label{eq:imperfect-mae-h1-1}
\begin{aligned}
     disc(\mathcal{H}_1^\ast) &= \sup_{h_1\in \mathcal{H}_1^\ast} |\mathcal{L}_{(\mathcal{D}, F+G_s)}(h_1) - \mathcal{L}_{(\mathcal{D}, F+G_\tau)}(h_1)|   \\
    &= \sup_{h_1\in \mathcal{H}_1^\ast} 
    |\mathbb{E}_{X_{t-T:t} \sim \mathcal{D}} [L(h_1(X_{t-T:t}), F(X(t)) \\
    & \qquad \qquad \qquad + G_s(O(X(t), \mathbb{I}-\mathcal{A}), X_{t-T:t-1};\Theta_2)) \\
    & \qquad \qquad \qquad - L(h_1(X_{t-T:t}), F(X(t)) \\
    & \qquad \qquad \qquad + G_\tau(O(X(t), \mathbb{I}-\mathcal{A}), X_{t-T:t-1};\Theta_2))]| \\
    &\stackrel{\rm (a)}{\leq} \sup_{h_1\in \mathcal{H}_1^\ast} \mathbb{E}_{X_{t-T:t} \sim \mathcal{D}} [|L(h_1(X_{t-T:t}), F(X(t)) \\
    & \quad + G_s(O(X(t), \mathbb{I}-\mathcal{A}), X_{t-T:t-1};\Theta_2)) \\ 
    & \qquad \qquad \qquad - L(h_1(X_{t-T:t}),  F(X(t)) \\
    & \qquad \qquad \qquad + G_\tau(O(X(t), \mathbb{I}-\mathcal{A}), X_{t-T:t-1};\Theta_2))|] \\
    &\stackrel{\rm (b)}{\leq} \mathbb{E}_{X_{t-T:t} \sim \mathcal{D}} [L(G_s(O(X(t), \mathbb{I}-\mathcal{A}), X_{t-T:t-1};\Theta_2), \\
    & \qquad \qquad \qquad \qquad G_\tau(O(X(t), \mathbb{I}-\mathcal{A}), X_{t-T:t-1};\Theta_2))],
\end{aligned}
\end{equation}
\normalsize
where (a) follows from Jensen's equality ($|\cdot|$ is convex) and (b) follows from the triangle inequality (which implies $|L(x, y)| \geq |L(x, z) - L(y, z)|$, for any $x,y,z \in \mathbb{R}$).
By Assumption~\ref{assump:realizability} in Section 5, we can set $h_1^\ast = F+G_s$ where $\mathcal{L}_{(\mathcal{D}, F+G_s)} (h_1^\ast) = 0$. Then the discrepancy of $\mathcal{H}_1$ is
\small
\begin{equation}
\label{eq:imperfect-mae-h1-2}
\begin{aligned}
    disc(\mathcal{H}_1^\ast) \stackrel{\rm (c)}{\geq} & \mathbb{E}_{X_{t-T:t} \sim \mathcal{D}} [L(F(X(t)) \\
    & \qquad + G_\tau(O(X(t), \mathbb{I}-\mathcal{A}), X_{t-T:t-1};\Theta_2), F(X(t)) \\
    & \qquad + G_s(O(X(t), \mathbb{I}-\mathcal{A}), X_{t-T:t-1};\Theta_2))] \\
    = & \mathbb{E}_{X_{t-T:t} \sim \mathcal{D}} [L(G_s(O(X(t), \mathbb{I}-\mathcal{A}), X_{t-T:t-1};\Theta_2), \\
    & \qquad \qquad \qquad G_\tau(O(X(t), \mathbb{I}-\mathcal{A}), X_{t-T:t-1};\Theta_2))],    
\end{aligned}
\end{equation}
\normalsize
where (c) follows from the definition that the supremum (the least element that is greater than or equal to each element in the set).
Thus from (\ref{eq:imperfect-mae-h1-1}) and (\ref{eq:imperfect-mae-h1-2}) together 
\small
\begin{equation}
\begin{aligned}
    disc(\mathcal{H}_1^\ast) & = \mathbb{E}_{X_{t-T:t} \sim \mathcal{D}} [L(G_s(O(X(t), \mathbb{I}-\mathcal{A}), X_{t-T:t-1};\Theta_2), \\
    & \qquad \qquad \qquad G_\tau(O(X(t), \mathbb{I}-\mathcal{A}), X_{t-T:t-1};\Theta_2))].    
\end{aligned}
\end{equation}
\normalsize
For $\mathcal{H}_2$, by the triangle inequality,
\small
\begin{equation}
\label{eq:imperfect-mae-h1-a}
\begin{aligned}
    disc(\mathcal{H}_2^\ast) &= \sup_{h_2\in \mathcal{H}_2^\ast} |\mathcal{L}_{(\mathcal{D}, F+G_s)}(h_2) - \mathcal{L}_{(\mathcal{D}, F+G_\tau)}(h_2)|   \\
    &\leq \mathbb{E}_{X_{t-T:t} \sim \mathcal{D}} [L(G_s(O(X(t), \mathbb{I}-\mathcal{A}), X_{t-T:t-1};\Theta_2), \\
    & \qquad \qquad \qquad G_\tau(O(X(t), \mathbb{I}-\mathcal{A}), X_{t-T:t-1};\Theta_2))].
\end{aligned}
\end{equation}
\normalsize
Hence we have shown that $disc(\mathcal{H}_2^\ast) \leq disc(\mathcal{H}_1^\ast)$. 
\end{proof}

\subsection{Discrepancy using MSE}

\begin{assumption}
\label{assump-negativecoor}
Let $U'=G_s(O(X(t), \mathbb{I}-\mathcal{A}), X_{t-T:t-1};\Theta_2)  G_\tau(O(X(t), \mathbb{I}-\mathcal{A}), X_{t-T:t-1};\Theta_2)$ be a random variable where ${X_{t-T:t} \sim \mathcal{D}}$,
$\mathbb{E}_{{X_{t-T:t} \sim \mathcal{D}}}[U'] \leq 0$.
     
\end{assumption}

\begin{corollary}
\label{theorem:robust-mse}
If (1) the training data is sampled from the source domain where assumption \ref{assump-symmetric} is true, 
(2) the labeling function in the source and target domain satisfy Assumption~\ref{assump-negativecoor}, 
(3) the loss function ${L}{(h, l)}$ is MSE, 
(4) $\mathcal{L}_{(\mathcal{D}, F)}(h_2^\ast) = 0$, 
then
then $disc(\mathcal{H}_2^\ast) \leq disc(\mathcal{H}_1^\ast)$.
\end{corollary}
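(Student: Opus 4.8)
The plan is to mirror the two-sided computation used for Theorem~\ref{theorem:robust}, but to replace the triangle-inequality step (which MSE violates) by a direct algebraic expansion of the squared loss, closing the argument with Assumption~\ref{assump-negativecoor}. First I would reduce both hypothesis sets to their realizable optimizers. Taking $\epsilon_1,\epsilon_2 \to 0$, realizability (Assumption~\ref{assump:realizability}) forces every $h_1 \in \mathcal{H}_1^\ast$ to satisfy $\mathcal{L}_{(\mathcal{D}, F+G_s)}(h_1)=0$, so for MSE $h_1 = F + G_s$ almost surely (this is Lemma~\ref{lemma:train-h2}); condition (4) together with Lemma~\ref{lemma:f=h} likewise collapses $\mathcal{H}_2^\ast$ to the single point $h_2^\ast = F$. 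This removes the supremum and lets me evaluate each discrepancy at one explicit hypothesis.

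Next I would compute the two discrepancies in closed form. Writing $G_s$ and $G_\tau$ for the source and target values of $G(O(X(t),\mathbb{I}-\mathcal{A}),X_{t-T:t-1};\Theta_2)$, the source label is $F+G_s$ and the target label is $F+G_\tau$. Evaluating at $h_1^\ast = F+G_s$ gives $\mathcal{L}_{(\mathcal{D},F+G_s)}(h_1^\ast)=0$ and $\mathcal{L}_{(\mathcal{D},F+G_\tau)}(h_1^\ast)=\mathbb{E}_{X_{t-T:t}\sim\mathcal{D}}[(G_s-G_\tau)^2]$, so
\begin{equation}
disc(\mathcal{H}_1^\ast) = \mathbb{E}_{X_{t-T:t}\sim\mathcal{D}}[(G_s-G_\tau)^2].
\end{equation}
Evaluating at $h_2^\ast = F$ gives $\mathcal{L}_{(\mathcal{D},F+G_s)}(h_2^\ast)=\mathbb{E}[G_s^2]$ and $\mathcal{L}_{(\mathcal{D},F+G_\tau)}(h_2^\ast)=\mathbb{E}[G_\tau^2]$, hence
\begin{equation}
disc(\mathcal{H}_2^\ast) = \left| \mathbb{E}_{X_{t-T:t}\sim\mathcal{D}}[G_s^2] - \mathbb{E}_{X_{t-T:t}\sim\mathcal{D}}[G_\tau^2] \right|.
\end{equation}

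Finally I would compare the two quantities. Expanding the square,
\begin{equation}
\mathbb{E}[(G_s-G_\tau)^2] = \mathbb{E}[G_s^2] + \mathbb{E}[G_\tau^2] - 2\,\mathbb{E}[G_s G_\tau],
\end{equation}
and Assumption~\ref{assump-negativecoor} (that $\mathbb{E}[U']=\mathbb{E}[G_s G_\tau]\le 0$) makes the cross term nonnegative, so $\mathbb{E}[(G_s-G_\tau)^2] \ge \mathbb{E}[G_s^2]+\mathbb{E}[G_\tau^2]$. Since $\mathbb{E}[G_s^2]$ and $\mathbb{E}[G_\tau^2]$ are each nonnegative, we have $|\mathbb{E}[G_s^2]-\mathbb{E}[G_\tau^2]| \le \mathbb{E}[G_s^2]+\mathbb{E}[G_\tau^2]$, and chaining the two bounds yields $disc(\mathcal{H}_2^\ast)\le disc(\mathcal{H}_1^\ast)$.

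The main obstacle is conceptual rather than computational: MSE does not obey the triangle inequality, so the clean $\mathcal{H}_2$ upper bound from Theorem~\ref{theorem:robust} is unavailable. The fix is to observe that the role played there by the triangle inequality is played here by Assumption~\ref{assump-negativecoor} — requiring $G_s$ and $G_\tau$ to be non-positively correlated is exactly what guarantees the cross term cannot pull $disc(\mathcal{H}_1^\ast)$ below the sum of the marginal second moments that dominates $disc(\mathcal{H}_2^\ast)$. I would also take care to justify the singleton reduction of the hypothesis sets (the $\epsilon_1,\epsilon_2\to 0$ limit), since without it the supremum over $\mathcal{H}_1^\ast$ could in principle select a non-optimal hypothesis, and the uniform-in-$h_1$ upper bound that handled this for the triangle-inequality case no longer applies under MSE.
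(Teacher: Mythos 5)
Your proposal is correct and follows essentially the same route as the paper's proof: lower-bound (or evaluate) $disc(\mathcal{H}_1^\ast)$ at the realizable optimum $F+G_s$ to obtain $\mathbb{E}[(G_s-G_\tau)^2]$, collapse $\mathcal{H}_2^\ast$ to $F$ via condition (4) so its discrepancy is dominated by $\mathbb{E}[G_s^2]+\mathbb{E}[G_\tau^2]$, and close the gap with the non-positive cross term from Assumption~\ref{assump-negativecoor}. The only differences are cosmetic --- you evaluate $disc(\mathcal{H}_2^\ast)$ exactly as $|\mathbb{E}[G_s^2]-\mathbb{E}[G_\tau^2]|$ where the paper routes through Jensen's inequality, and you collapse $\mathcal{H}_1^\ast$ to a singleton where the paper only needs the one-sided supremum bound.
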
 

\begin{proof}

By Assumption~\ref{assump:realizability}, we can set $h_1^\ast = F+G_s$ where $\mathcal{L}_{(\mathcal{D}, F+G_s)} (h_1^\ast) = 0$, then the discrepancy of $\mathcal{H}_1$ is
\small
\begin{equation}
\label{eq:imperfect-mse-h1-2}
\begin{aligned}
    & disc (\mathcal{H}_1^\ast) = \sup_{h_1\in \mathcal{H}_1^\ast} |\mathbb{E}_{X_{t-T:t} \sim \mathcal{D}} [L(h_1(X_{t-T:t}), F(X(t)) \\
    & \qquad + G_s(O(X(t), \mathbb{I}-\mathcal{A}), X_{t-T:t-1};\Theta_2)) \\
    & \qquad -  L(h_1(X_{t-T:t}),  F(X(t)) \\
    & + G_\tau(O(X(t), \mathbb{I}-\mathcal{A}), X_{t-T:t-1};\Theta_2))]| \\
    \stackrel{\rm (d)}{\geq} & \mathbb{E}_{X_{t-T:t} \sim \mathcal{D}} [L(F(X(t)) + G_\tau(O(X(t), \mathbb{I}-\mathcal{A}), X_{t-T:t-1};\Theta_2), \\ 
    & \qquad \qquad \qquad F(X(t)) + G_s(O(X(t), \mathbb{I}-\mathcal{A}), X_{t-T:t-1};\Theta_2))] \\
    = & \mathbb{E}_{X_{t-T:t} \sim \mathcal{D}} [(G_s(O(X(t), \mathbb{I}-\mathcal{A}), X_{t-T:t-1};\Theta_2) \\
    & - G_\tau(O(X(t), \mathbb{I}-\mathcal{A}), X_{t-T:t-1};\Theta_2))^2],    
\end{aligned}
\end{equation}
\normalsize
where (d) follows from the definition of the supremum (the least element that is greater than or equal to each element in the set).
We note that the triangular equality is not necessarily true in this case thus we cannot find the upper bound of $disc(\mathcal{H}_1^\ast)$.

Since $\mathcal{L}_{(\mathcal{D}, F)}(h_2^\ast) = \mathbb{E}_{X_{t-T:t} \sim \mathcal{D}} [((h_2^\ast(X(t)) - F(O(X(t), \mathcal{A});\Theta_1))^2] = 0$ implies that $h_2^\ast = F(O(X(t), \mathcal{A});\Theta_1)$. 

Then the discrepancy of $\mathcal{H}_2^\ast$ is:
\small
\begin{equation}
\label{eq:imperfect-mse-h2-2}
\begin{aligned}
    disc (\mathcal{H}_2^\ast) & = \sup_{h_2^\ast\in \mathcal{H}_2^\ast}|\mathbb{E}_{X_{t-T:t} \sim \mathcal{D}} [(h_2^\ast(X(t)) - F(O(X(t), \mathcal{A});\Theta_1) \\
    & - G_s(O(X(t), \mathbb{I}-\mathcal{A}), X_{t-T:t-1};\Theta_2))^2  \\
    & - (h_2^\ast(X(t)) - F(O(X(t), \mathcal{A});\Theta_1) \\
    & - G_\tau(O(X(t), \mathbb{I}-\mathcal{A}), X_{t-T:t-1};\Theta_2))^2]| \\
    & \stackrel{\rm (e)}{\leq} \sup_{h_2^\ast \in \mathcal{H}_2^\ast}\mathbb{E}_{X_{t-T:t} \sim \mathcal{D}} [|(h_2^\ast(X(t)) - F(O(X(t), \mathcal{A});\Theta_1) \\
    & - G_s(O(X(t), \mathbb{I}-\mathcal{A}), X_{t-T:t-1};\Theta_2))^2 - \\
    & (h_2^\ast(X(t)) - F(O(X(t), \mathcal{A});\Theta_1) \\
    & - G_\tau(O_2(H_{t,T},
    \mathcal{G});\Theta_2))^2|] \\
    & = \mathbb{E}_{X_{t-T:t} \sim \mathcal{D}}[|(G_s(O(X(t), \mathbb{I}-\mathcal{A}), X_{t-T:t-1};\Theta_2))^2 \\
    & -(G_\tau(O(X(t), \mathbb{I}-\mathcal{A}), X_{t-T:t-1};\Theta_2))^2|] \\
    & \leq \mathbb{E}_{X_{t-T:t} \sim \mathcal{D}}[(G_s(O(X(t), \mathbb{I}-\mathcal{A}), X_{t-T:t-1};\Theta_2))^2 \\
    & +(G_\tau(O(X(t), \mathbb{I}-\mathcal{A}), X_{t-T:t-1};\Theta_2))^2] \\
    & \stackrel{\rm (f)}{\leq} \mathbb{E}_{X_{t-T:t} \sim \mathcal{D}}[|(G_s(O(X(t), \mathbb{I}-\mathcal{A}), X_{t-T:t-1};\Theta_2)\\
    &-G_\tau(O(X(t), \mathbb{I}-\mathcal{A}), X_{t-T:t-1};\Theta_2))^2|],
\end{aligned}
\end{equation}
\normalsize
where (e) follows from Jensen's inequality,
(f) follows from Assumption~\ref{assump-negativecoor}.
Hence by Eq. (\ref{eq:imperfect-mse-h1-2})(\ref{eq:imperfect-mse-h2-2}), we know $disc(\mathcal{H}_2^\ast) \leq disc(\mathcal{H}_1^\ast)$
\end{proof}



\section{Data Support}\label{sec:data-support}
\begin{figure}[h!]
    \subfloat[Traffic speed prediction.]
    {\hspace{-.0in}\includegraphics[width=0.95\columnwidth]{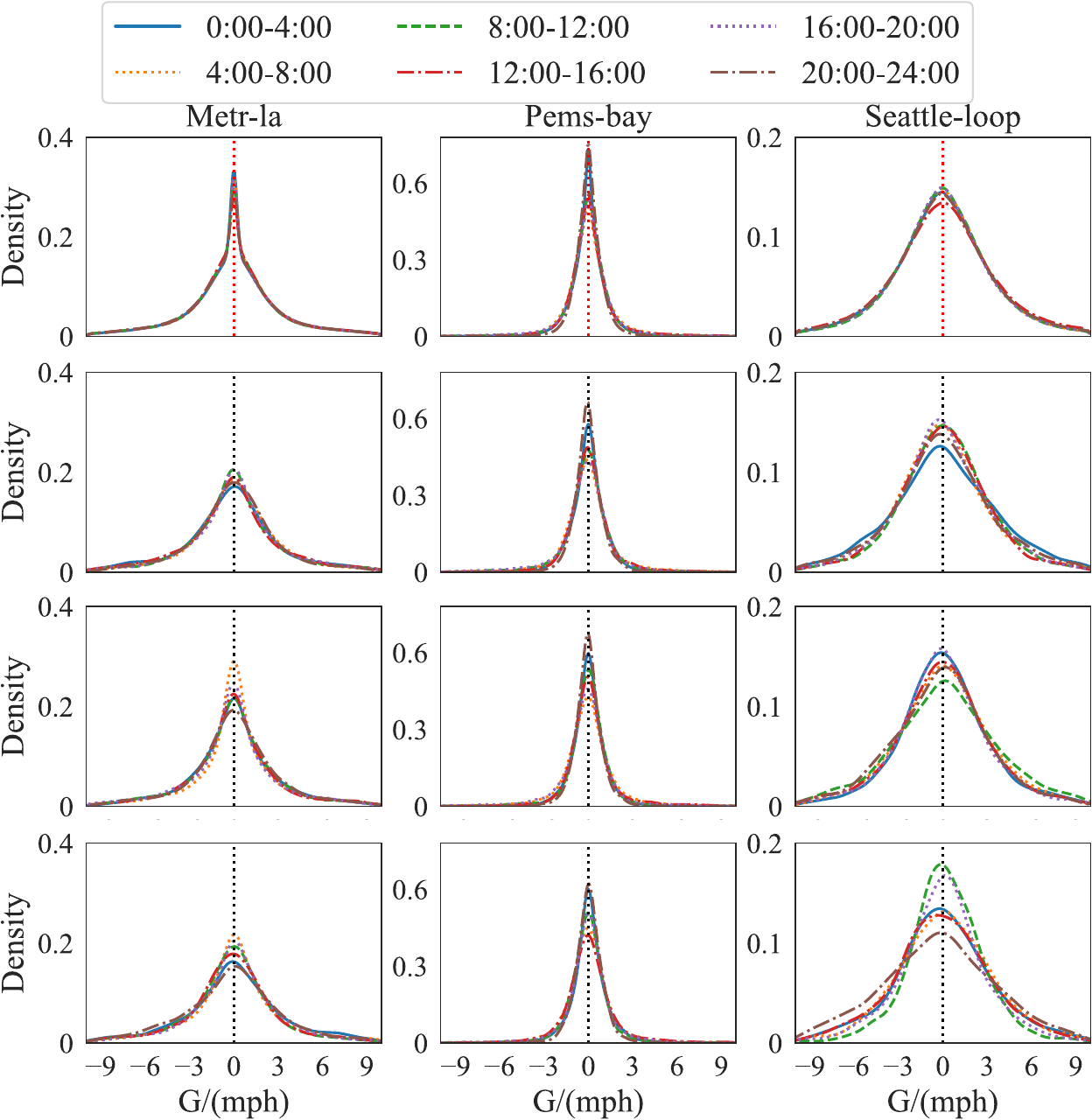}
    \label{fig:symmetric}}
    \quad
    \subfloat[ILI prediction.]{
    \vspace{-.0in}\includegraphics[width=0.95\columnwidth]{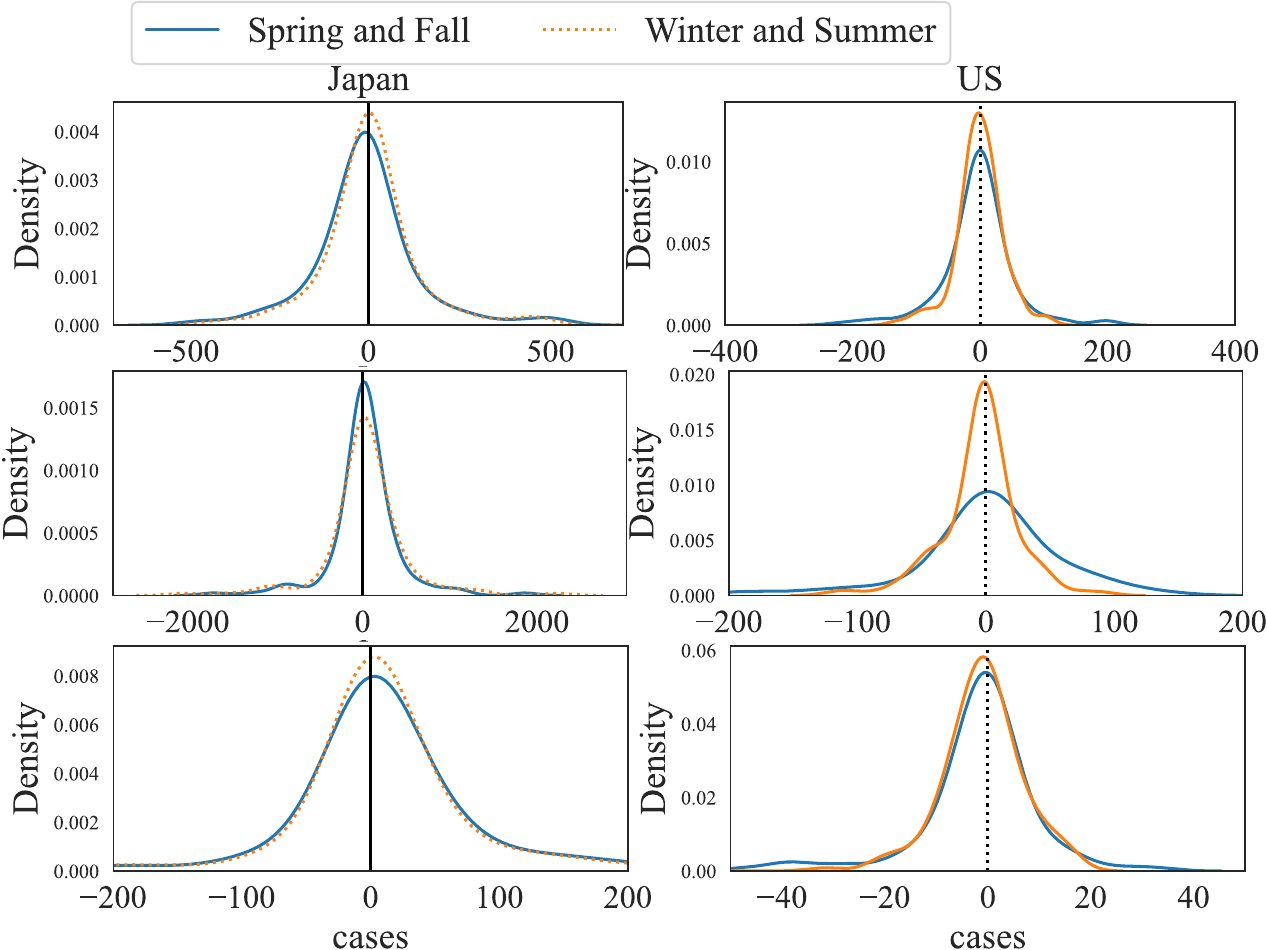}
    \label{fig:symmetric-covid}}
    \caption{(a) The pdf of the random variable, $G$ is symmetric about 0 for all the time periods.
    Figures in the first row are the mixed distribution of all sensors. 
    Figures in the following three rows are the distribution of three randomly selected sensors in each dataset. 
    (b) The pdf of the random variable, $G$ is symmetric about 0 for all seasons. We randomly select 3 vertices in each data set.}
\end{figure}

We empirically verify the condition in Assumption \ref{assump-symmetric}, in the scenario that the Reaction-Diffusion traffic model is the underlying
physical law, and consequently, RDGCN, when trained well, perfectly models function $f$. Then,
\small
\begin{equation}
\begin{aligned}
   & G_s(O(X(t), \mathbb{I}-\mathcal{A}), X_{t-T:t-1};\Theta_2) \\
   = & X_{t+1} - F(O(X(t), \mathcal{A});\Theta_1)) \\
 \approx & X_{t+1} - RDGCN(X(t)).    
\end{aligned}
\end{equation}
\normalsize


The probability density function (pdf) of the random variable $G=G_s(O(X(t), \mathbb{I}-\mathcal{A}), X_{t-T:t-1};\Theta_2)$ in all the six periods
 are shown in Figure~\ref{fig:symmetric}, 
which plots the empirical density of the variable $G$ in each dataset.
As can be observed, the empirical density adheres to the condition in Assumption~\ref{assump-symmetric}.
For SIRGCN, we approximate $G_s$ by

\begin{equation}
\begin{aligned}
        & G_s(O(X(t), \mathbb{I}-\mathcal{A}), X_{t-T:t-1};\Theta_2) \\
        = & X_{t+1} - F(O(X(t), \mathcal{A});\Theta_1)) \\
        \approx & X_{t+1} - SIRGCN(X(t)).    
\end{aligned}
\end{equation}
The corresponding pdf plot of the random variable $G$ in ILI prediction is in Figure~\ref{fig:symmetric-covid}.

\section{Most Important Sensors under Mismatches}\label{sec:important-sensors}


In this section, we provide a motivation for going beyond domain-agnostic deep learning models by illustrating a possible weakness of such a model under mismatched data.
Specifically, we apply a post-hoc explanation tool GNNExplainer~\cite{ying2019gnnexplainer} to 
identify the most influential sensors contributing to a model's prediction at the target sensor.
We choose the Spatio-Temporal GCN (STGCN) model which has a good performance in graph time series prediction, 
particularly in traffic speed prediction.
The STGCN model is trained by four-hour data in a sequence of 12 consecutive weekdays, while the GNNExplainer is used to identify the 3 most influential sensors on the weekend data.
We show the location of the 3 most influential sensors under matched data (train by weekday data and test by weekday data), and mismatched data (trained by weekday data and test by weekend data) in Figure~\ref{fig:GNNEXplainer-map}.

Figure~\ref{fig:GNNEXplainer-map} shows that when the test distribution is mismatched with the training distribution, the most influential sensors identified by GNNExplainer are too far to drive within the prediction window, and the distances change significantly. 
In other words, speed measurements from vertices that are too far to influence the target vertex, and suggests a violation of domain traffic law. 
This forms the motivation for our approach.



\section{Reaction-diffusion Equation}\label{sec:rdgcn}

As seen in Eq.~(\ref{eq:domain-knowledge}), the change in speed is a function of two terms. The diffusion term is a monotone linear function of speed change in the direction of traffic, 
and it relies on the empirical fact that in the event of congestion, drivers prefer to bypass the congestion by following one of the
neighboring links~(Figure~\ref{fig:physics-diffusion}). 
The reaction term is a non-linear monotone function ($\tanh$ activation)
of speed change opposite to the direction of traffic, and it relies on the empirical fact that a road surrounded by congested roads is highly likely to be congested as well~(Figure~\ref{fig:physics-reaction}).
The architecture of RDGCN is shown in Figure~\ref{fig:architecture}.

\begin{figure}[!ht]
\centering
\includegraphics[width=0.99\columnwidth]{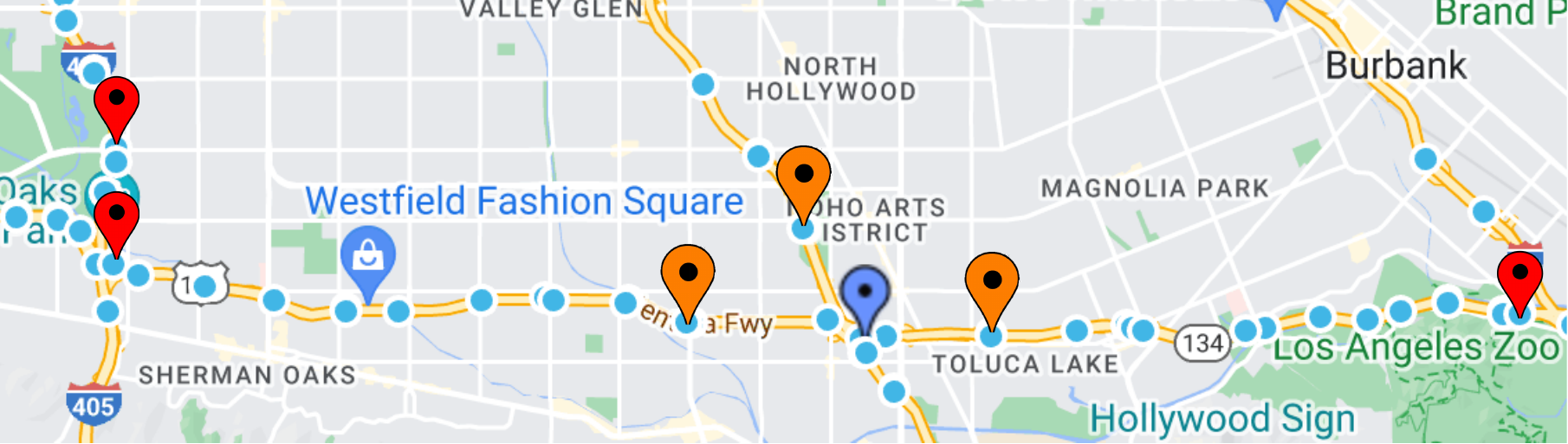} 
\caption{When an STGCN is tested on dataset from a matching distribution, the most important sensors (orange markers) are near the target sesnor, whereas the most important sensors under mismatched data (red markers) for the traffic speed prediction at target sensor (blue marker) are located far away. 
However, under matched data,
the most important sensors are often close to the target sensor.}
\label{fig:GNNEXplainer-map}
\end{figure}

\begin{figure}[!ht]
    \centering
    \subfloat[Diffusion.]
    {\label{fig:physics-diffusion}
    \hspace{-.0in}\includegraphics[width=0.49\columnwidth]{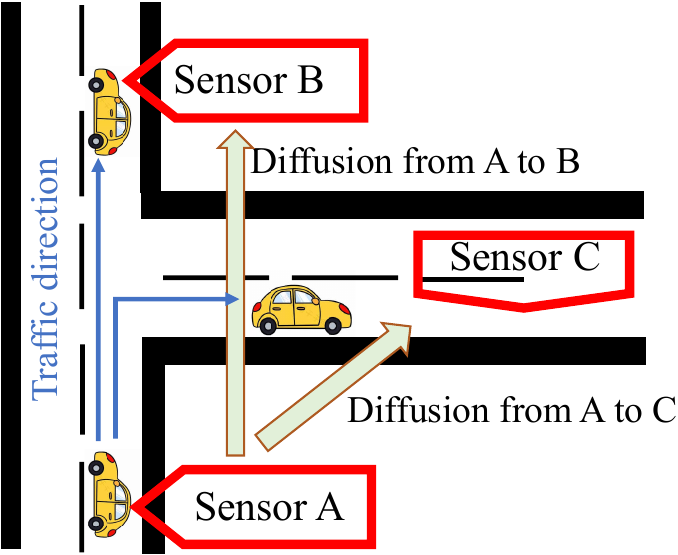}}
    \hfill%
    \subfloat[Reaction.]
    {\label{fig:physics-reaction}
    \hspace{-.11in}\includegraphics[width=0.49\columnwidth]{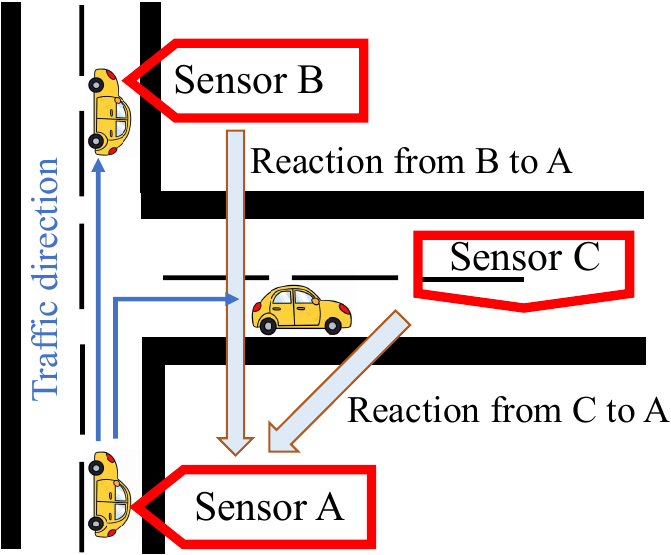}}
    \caption{(a) Diffusion occurs in the direction of a road segment;
    (b) reaction occurs opposite to the direction of a road segment. 
    }
    \label{fig:motivation-1}
    \vspace{-0.1in}
\end{figure}

\begin{figure}[!ht]
\vspace{-.10in}
\centering
\includegraphics[width=0.99\columnwidth]{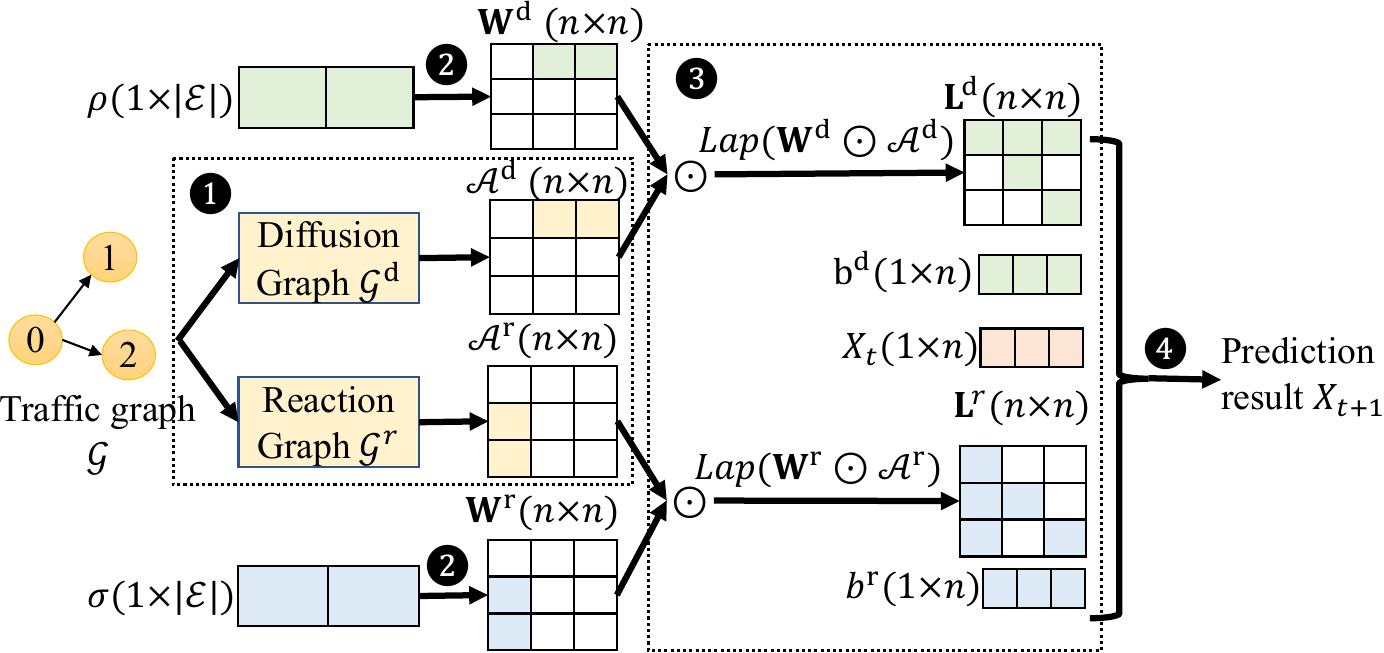} 
\caption{Reaction-diffusion GCN architecture for graph with $|\mathcal{V}|=3$ and $|\mathcal{E}|=2$. $\protect\circled{1}$ derives the diffusion and reaction adjacency matrices $A^d$ and $A^r$; $\protect\circled{2}$ defines model weights $\rho$ and $\sigma$ for the reaction and diffusion
networks, and map them to $\mathbf{W}^d$ and $\mathbf{W}^r$ with weights $\rho$ and $\sigma$; $\protect\circled{3}$ characterizes the Graph Laplacian $\mathbf{L}^d$ and $\mathbf{L}^r$; $\protect\circled{4}$ defines the network prediction function Eq.~(\ref{eq:network-prediction}).}
\label{fig:architecture}
\end{figure}

\section{Ablation Study}\label{sec:ablation}

\subsection{Analysis of RDGCN in Traffic Speed Prediction}

\textbf{Are reaction and diffusion processes essential?}
In this section, we investigate the prediction models incorporating the reaction equation and the diffusion equation, independently, under limited and mismatched data, to understand whether both the reaction and diffusion processes are essential.
We use the same training set (i.e., 12 consecutive working days selected randomly) and test set (i.e., hourly weekend data) in Section~\ref{sec:result}.
The curves of MAE versus time using the model incorporating the reaction equation, the diffusion equation, and the reaction-diffusion equation are shown in Figure~\ref{fig:ablation}.

Figure~\ref{fig:ablation} indicates that the predictions of all models with the reaction-diffusion equation provide low MAE with low variance (i.e., the difference between curves with the highest MAE and lowest MAE is small) over time. However, the predictions of the reaction models only and the diffusion models only have weaker performance in at least one time period. 
We speculate that using only the reaction equation or the diffusion equation is not sufficient to capture the dynamics of the traffic speed change completely. 
Furthermore, the prediction of the model incorporating the reaction-diffusion equation is not uniformly better than the prediction of the model incorporating only the reaction or diffusion equation. 
One possible reason is that the reaction or diffusion process does not always exist in a specific period 
(e.g., if two neighboring road segments are in free flow during the test period, the traffic speeds at the two segments do not affect each other. 
Thus there is neither diffusion nor reaction between these two road segments).
These observations further strengthen that both the reaction and diffusion processes are necessary for a reliable prediction.

\textbf{What is the performance under RMSE?}
\begin{figure*}[!h]

\centering
\includegraphics[width=0.9\textwidth]{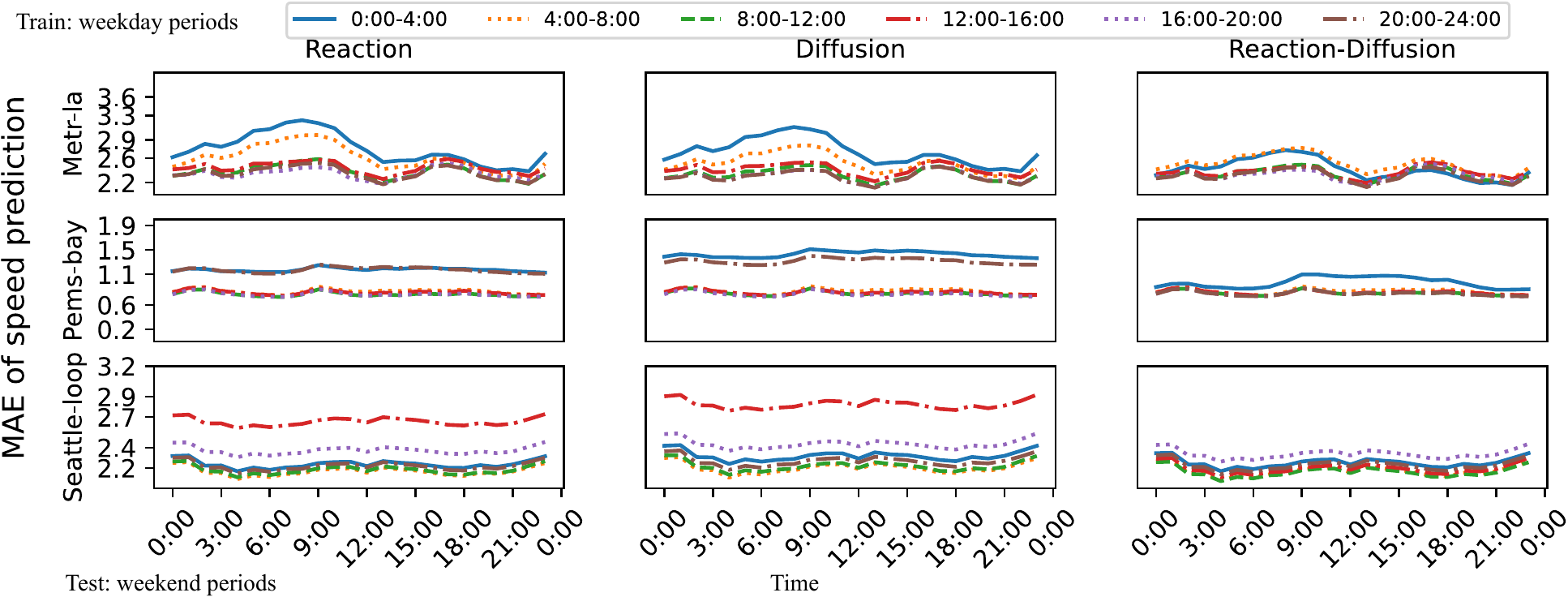} 

\caption{
MAE of speed predictions on models incorporating reaction equation, diffusion equation, and reaction-diffusion equation. 
}

\label{fig:ablation}
\end{figure*}
We plot the corresponding result under RMSE loss in Figure~\ref{fig:mismatch-mse-all}, and the conclusion is consistent with the result using MAE. The RMSE of RDGCN are with low variance regardless of the peroid of the training set.
We acknowledge that RDGCN is not always better than baselines under RMSE, for example, when STGCN is trained with weekday data from 16:00-20:00 in Metr-la.
One possible reason is that the mismatches between the training data and test data are not significant during the corresponding time period. 
The prediction results of RDGCN in terms of RMSE may not always be stable.
For instance, when considering the models for the 4:00 to 8:00 time period in Metr-la, we observe distinct prediction outcomes. This variation could be due to the difference between the pattern of the morning rush hour during selected weekdays and the pattern during weekends.
When the training set includes all available weekday data, the predictions of RDGCN demonstrate stability.

\begin{figure*} 
    \centering

    \subfloat[Baseline models and RDGCN trained on $12$ consecutive weekdays and augmented by MAML.]
    {\label{fig:mismatch-mae-app}
    \includegraphics[width=0.99\textwidth, height=0.266\textwidth]{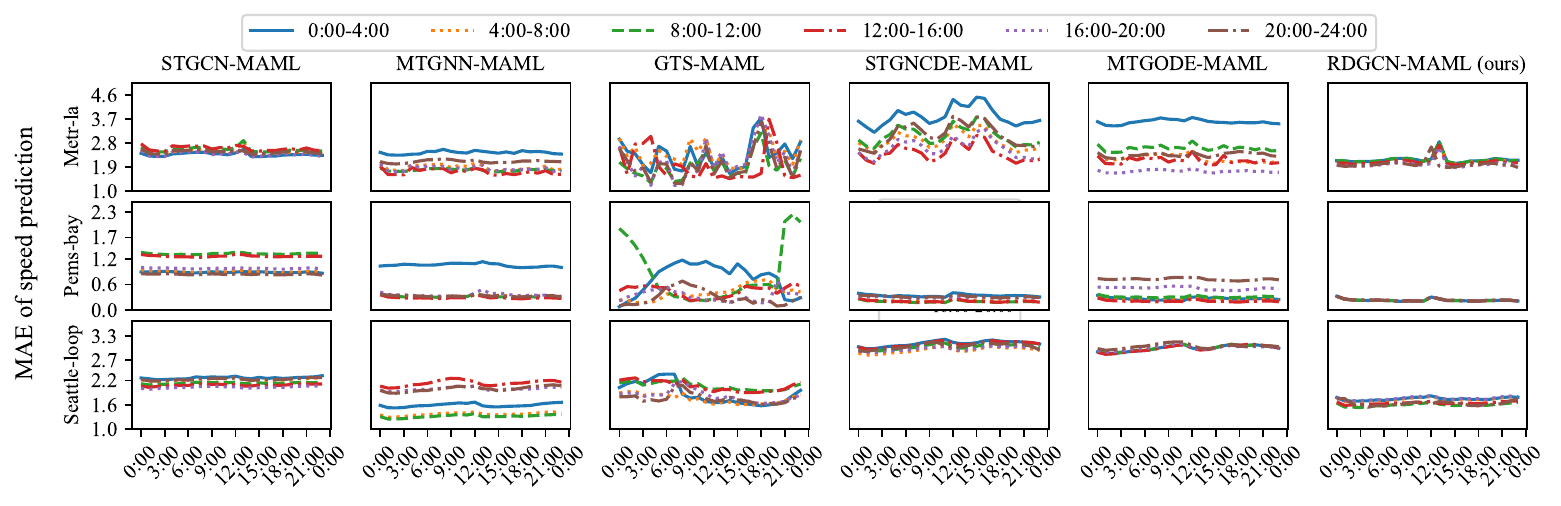}}

    \subfloat[Baseline models and RDGCN trained on $12$ consecutive weekdays and augmented by MAML.]
    {\label{fig:mismatch-mse}
    \includegraphics[width=0.99\textwidth, height=0.266\textwidth]{{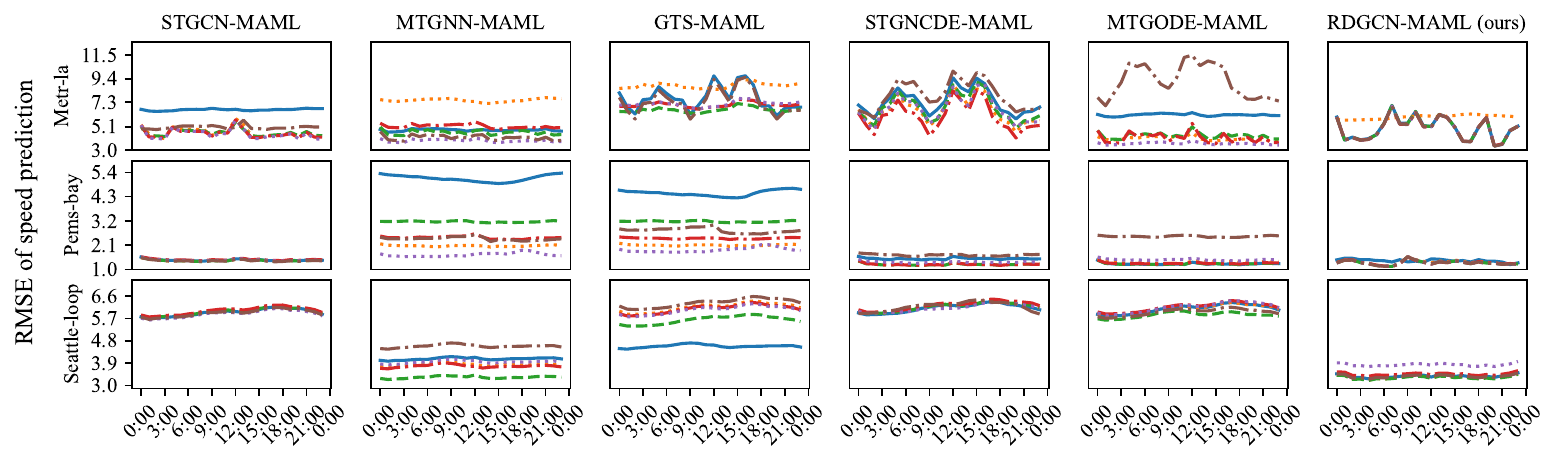}}}
    \hfill%

    \subfloat[Baseline models and RDGCN trained on more than half a year of weekdays.]
    {\label{fig:mismatch-mae-app}
    \includegraphics[width=0.99\textwidth, height=0.266\textwidth]{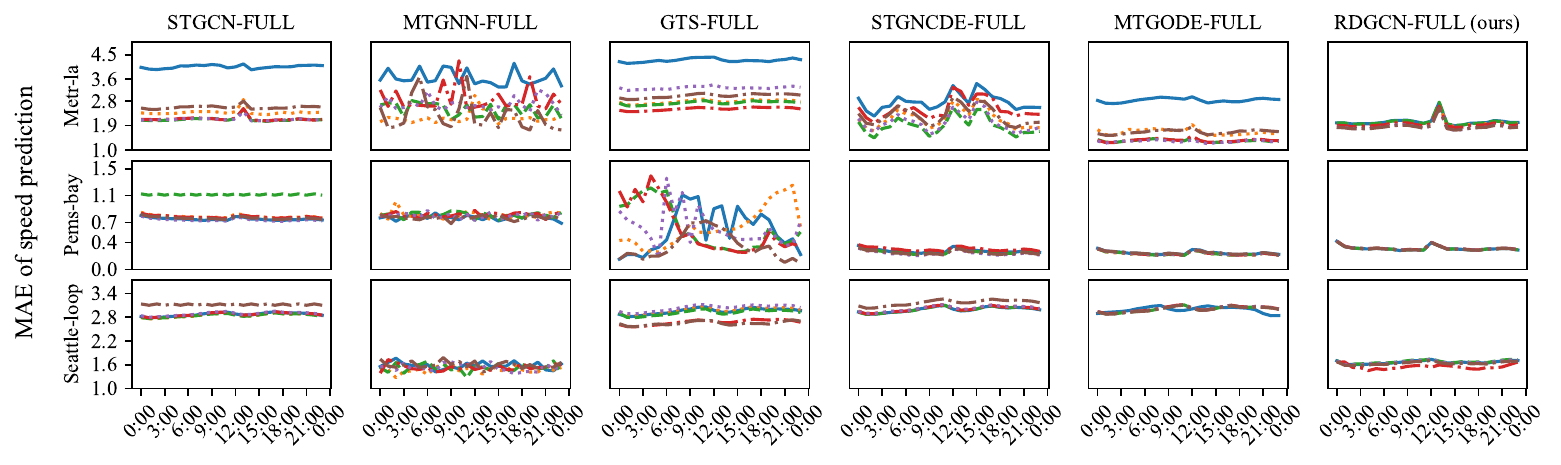}}
    
    \subfloat[Baseline models and RDGCN trained on more than half a year of weekdays.]
    {\label{fig:mismatch-maml-mse}
    \includegraphics[width=0.99\textwidth, height=0.266\textwidth]{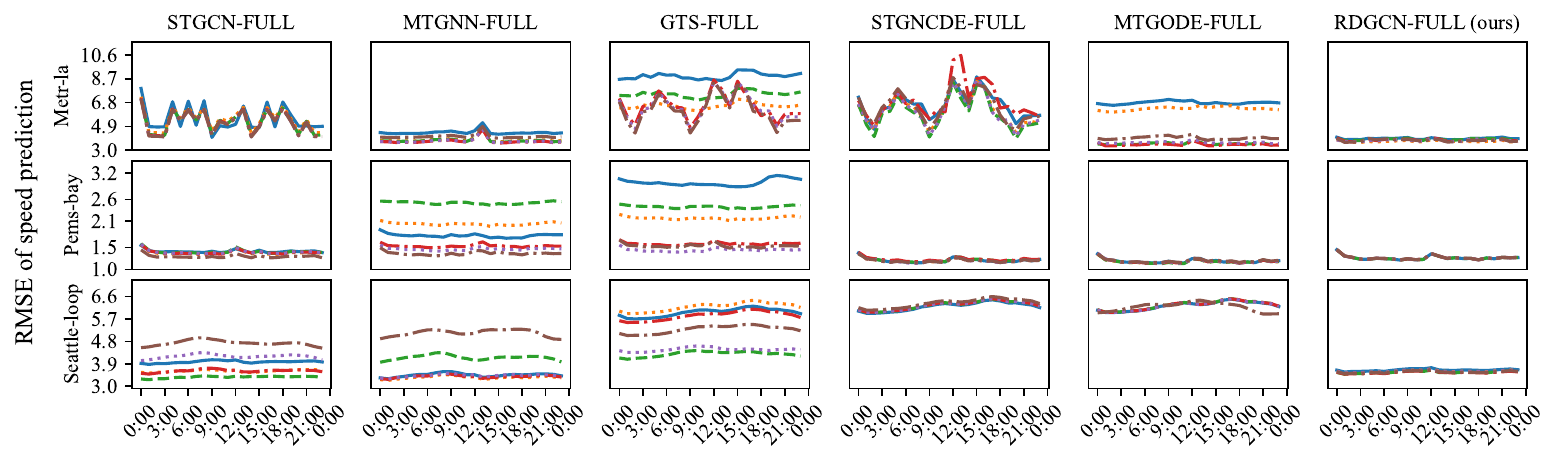}}

    \caption{
(a)(b) The results of RDGCN are very close regardless of the period of the training set. 
(c)(d) Even though all the models are trained using all available weekdays, the results of RDGCN are still closer, regardless of the period, compared to baseline models.}
\label{fig:mismatch-mse-all}
\end{figure*}

\tiny
\begin{table*}[htbp]
\caption{Numerical result of 
Figure~\ref{fig:mismatch-all}: the Mean and STD of prediction MAE and RMSE of RDGCN and baselines on three real-world datasets.
}

\centering
\resizebox{1.4\columnwidth}{!}{
\makebox[\textwidth][c]{
    \begin{tabular}{ccccccccccccc}
\toprule
             & \multicolumn{6}{c}{MAE} & \multicolumn{6}{c}{RMSE} \\
\cmidrule(lr){2-7}\cmidrule(lr){8-13}
With MAML & STGCN & MTGNN  & GTS & STGNCDE & MTGODE & RDGCN & STGCN & MTGNN  & GTS & STGNCDE & MTGODE & RDGCN \\ \midrule
Metr-la      & 2.47 $\pm$ 0.11 & 2.41 $\pm$ 0.22 & 2.55 $\pm$ 0.48 & 3.27 $\pm$ 0.47 & 2.82 $\pm$ 0.49  & \textbf{2.39} $\pm$ \textbf{0.08}  
& 5.28 $\pm$ 0.94 & 5.17 $\pm$ 1.16 & 7.55 $\pm$ 0.91 & 7.01 $\pm$ 1.28 & 5.41 $\pm$ 2.01 & 4.96 $\pm$ 0.83

\\
\midrule
Pems-bay       
& 1.03 $\pm$ 0.19 & 0.91 $\pm$ 0.21 & 0.96 $\pm$ 0.03 & 0.77 $\pm$ 0.06 & 0.86 $\pm$ 0.14  & \textbf{0.83} $\pm$ \textbf{0.03}   & 1.41 $\pm$ 0.05 & 2.86 $\pm$ 1.11 & 2.85 $\pm$ 0.84 & 1.44 $\pm$ 0.16 & 1.58 $\pm$ 0.44  & 1.40 $\pm$ 0.05 

\\
\midrule
Seattle-loop & 2.20 $\pm$ 0.08 & 2.23 $\pm$ 0.24 & 2.34 $\pm$ 0.15  & 3.20 $\pm$ 0.07 & 3.17 $\pm$ 0.05  & \textbf{2.16} $\pm$ \textbf{0.05}
& 5.94 $\pm$ 0.14 & 3.92 $\pm$ 0.37 & 5.80 $\pm$ 0.60 & 6.16 $\pm$ 0.17 & 6.04 $\pm$ 0.19 & 3.44 $\pm$ 0.18
\\
\midrule
FULL & STGCN & MTGNN  & GTS & STGNCDE & MTGODE & RDGCN & STGCN & MTGNN  & GTS & STGNCDE & MTGODE & RDGCN \\ \midrule

Metr-la      & 2.57 $\pm$ 0.68 & 3.11 $\pm$ 0.48 & 3.44 $\pm$ 0.47 & 2.77 $\pm$ 0.35 & 2.31 $\pm$ 0.43 & \textbf{2.38} $\pm$ \textbf{0.13}  
& 5.31 $\pm$ 0.92 & 4.02 $\pm$ 0.31 & 7.04 $\pm$ 1.20 & 6.43 $\pm$ 1.24 & 4.70 $\pm$ 1.38 & 3.90 $\pm$ 0.10
\\
\midrule
Pems-bay       
& 1.38 $\pm$ 0.06 & 1.85 $\pm$ 0.38 & 2.08 $\pm$ 0.51 & 0.83 $\pm$ 0.09 & 0.79 $\pm$ 0.02 & \textbf{0.74} $\pm$ \textbf{0.02}  
& 1.37 $\pm$ 0.06 & 1.85 $\pm$ 0.38 & 2.08 $\pm$ 0.53 & 1.38 $\pm$ 0.04 & 1.36 $\pm$ 0.04 & 1.38 $\pm$ 0.04 

\\
\midrule
Seattle-loop & 2.90 $\pm$ 0.10 & 2.81 $\pm$ 0.65 & 3.11 $\pm$ 0.11 & 3.32 $\pm$ 0.07 & 3.21 $\pm$ 0.05 & \textbf{2.18} $\pm$ \textbf{0.06}
& 3.91 $\pm$ 0.45 & 3.81 $\pm$ 0.65 & 5.33 $\pm$ 0.74 & 6.25 $\pm$ 0.17 & 6.22 $\pm$ 0.17 & 3.58 $\pm$ 0.05
\\

\bottomrule

\label{tab:final}
\end{tabular}%
}}
\end{table*}%
\normalsize

\textbf{Experimental results.}
The Mean and STD of prediction MAE (resp. RMSE) of each model with MAML augmentation and with full weekday training set are shown in Table~\ref{tab:final} (i.e., the Mean and STD of all points on each subfigure in Figure~\ref{fig:mismatch}, Figure~\ref{fig:mismatch-maml}, Figure~\ref{fig:mismatch-mse} and Figure~\ref{fig:mismatch-maml-mse}), respectively. 
Table~\ref{tab:final} shows that RDGCN has lower MAE (resp. RMSE) and lower variance compared with baselines under limited training set with MAML augmentation, and the gain of adding more data on RDGCN is limited, which are consistent with our observation in Figure~\ref{fig:mismatch-all} in Section~\ref{sec:result}.

\begin{figure}
\centering
\includegraphics[width=\columnwidth]{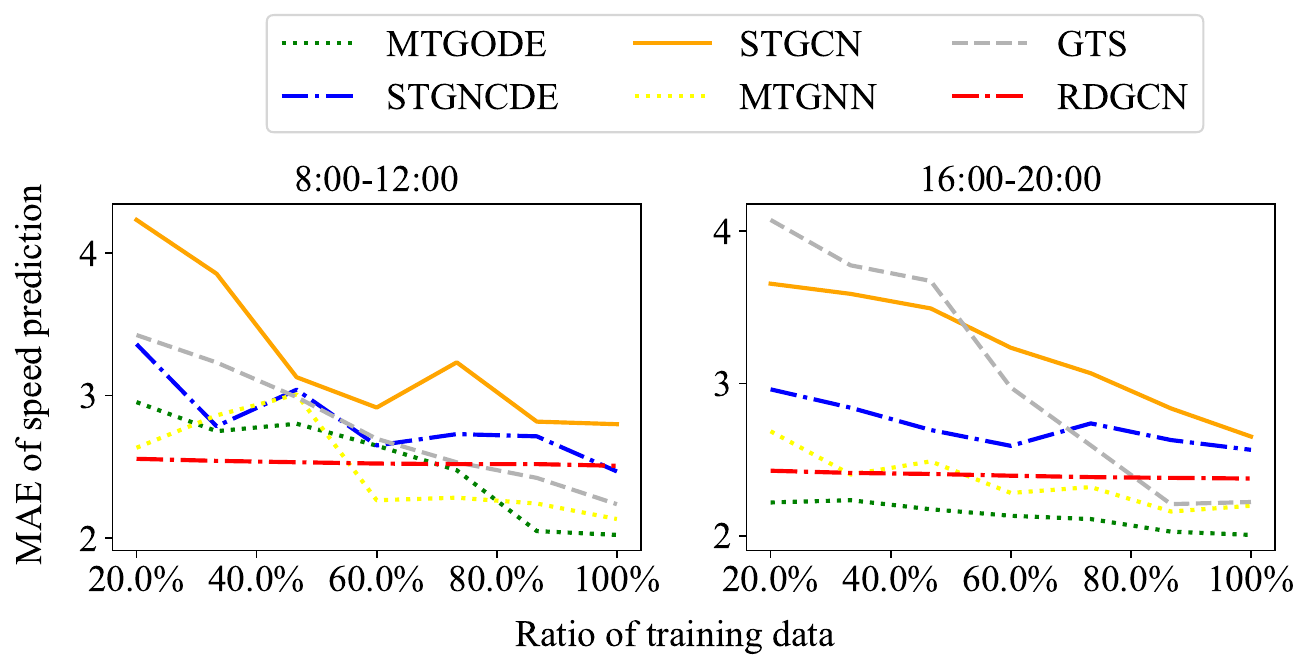} 
\caption{Feeding more training data does not lead to a significant change in the MAE of RDGCN's prediction.}
\label{fig:num_training-data}
\end{figure}

\textbf{Impact of data volume.}
We further investigate the influence of training data volume on the performance of baseline models and RDGCN under a mismatched setting. We focus on assessing the adequacy of training data for both morning rush hour (8:00-12:00) and evening rush hour (16:00-20:00) scenarios using the Metr-la dataset. These periods exhibit considerable patterns and exhibit relatively minor mismatches between training and test datasets. To this end, we randomly select contiguous weekdays, ranging from $20\%$ to the entire dataset, for training the models. The MAE of speed prediction across varying quantities of training data is shown in Figure~\ref{fig:num_training-data}.

Figure~\ref{fig:num_training-data} showcases the performance characteristics of RDGCN and baseline models over the specified time intervals. Remarkably, the performance of RDGCN remains consistent irrespective of the training dataset size. Conversely, the predictive capabilities of STGNCDE and MTGODE are notably contingent upon the amount of training data employed. The observed trend underscores increased training data volume directly correlates with enhanced prediction accuracy. In the morning rush hour, MTGODE achieves optimal performance with approximately $75\%$ of training data (equivalent to 60 weekdays), while STGNCDE demonstrates comparable performance when trained on the entire weekday dataset. We note that the superiority of RDGCN over baseline models is not universally consistent, as elucidated earlier. Notably, integrating domain differential equations drastically reduces the hypothesis class's size, thereby filtering out erroneous hypotheses often prevalent in conventional black-box graph learning models. Consequently, domain-differential-equation-informed GCNs exhibit remarkable robustness on relatively smaller training datasets.

\subsection{Analysis of SIRGCN in ILI Prediction}

\textbf{Do the infection rates vary among different vertices?}
In this section, we delve into the question of whether we require an individual infection rate for each vertex in ILI prediction. We specifically examine two approaches: one where we assign a unique infection rate, denoted as $\beta_i$, to each vertex $i$, resulting in a SIRGCN with $n$ infection rates (SIRGCN-n), and another approach where we assign a single infection rate, denoted as $\beta$, to all vertices (SIRGCN-1). 
We report the MAE and RMSE of the prediction under mismatched data (train using Winter-Summer data and test using Spring-Fall data) in Table~\ref{tab:sir-beta-metrices}.

Table~\ref{tab:sir-beta-metrices} shows that employing multiple infection rates leads to more accurate predictions, particularly in the case of the US-state dataset. 
By assigning individual infection rates to each vertex, we achieve a reduction of $2.4\%$ in MAE (and $1.6\%$ in RMSE). However, the advantage of utilizing multiple infection rates is less pronounced ($< 1\%$) in the ILI prediction of Japan. 
There could be two potential explanations for this phenomenon. First, the size of Japan's prefectures is not as substantial as that of the states in the United States. Second, the climate across Japan is relatively homogeneous, whereas the climate across different states of the United States exhibits significant variations, such as wet coastal areas and dry inland areas.

\begin{table}[htbp]
\small
  \caption{Evaluation models under mismatched data.}
\resizebox{\columnwidth}{!}{
    \begin{tabular}{ccccc}
\toprule
             & \multicolumn{2}{c}{MAE} & \multicolumn{2}{c}{RMSE} \\
\cmidrule(lr){2-3}\cmidrule(lr){4-5}
 & SIRGCN-1 & SIRGCN-n  & SIRGCN-1 & SIRGCN-n \\ \midrule
 Japan-Prefectures & 344 $\pm$ 22 & 342 $\pm$ 22 & 871 $\pm$ 43  & 863 $\pm$ 44 \\
 US-States & 42 $\pm$ 4  & 41 $\pm$ 4 & 123 $\pm$ 10 & 121 $\pm$ 10 \\
\bottomrule
\end{tabular}%
}

\label{tab:sir-beta-metrices}
\end{table}%
\normalsize

\textbf{Predictions in Different Seasons.}
Learning patterns across different trends becomes challenging when baseline models are not trained using the same trend. For example, during Winter, the infectious number shows an increasing trend, whereas during Spring, it exhibits a decreasing trend. Figure~\ref{fig:covid-rapid} shows the predicted number of infectious cases alongside the ground truth data, revealing that SIRGCN's prediction aligns better with the ground truth. Conversely, EpiGNN's prediction performs poorly during the decline phase and when the number of infections approaches $0$.

In the case of US-State ILI prediction in May 2014, both COLAGNN and EPIGNN fail to make accurate predictions around the peak, while SIRGCN demonstrates its effectiveness during the corresponding period, with the help of SIR-network model.

\begin{figure}
\centering
\includegraphics[width=0.95\columnwidth]
{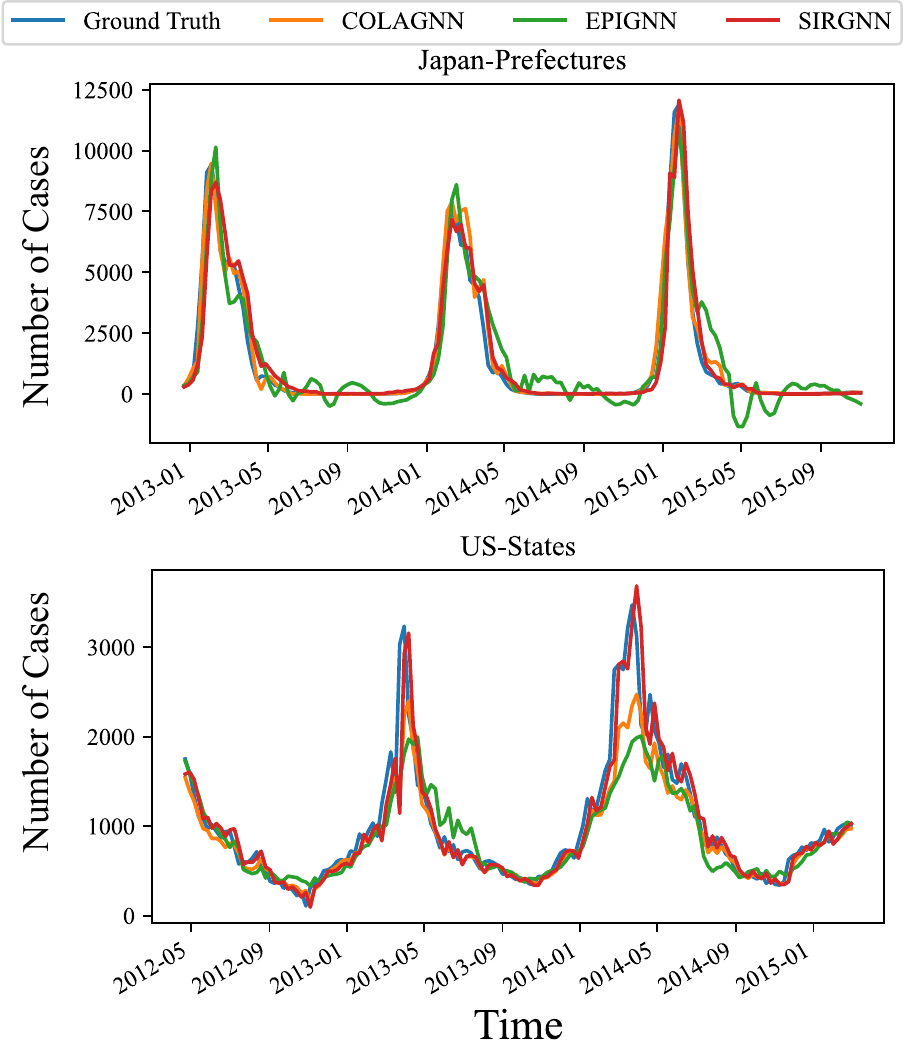} 
\caption{SIRGCN can make accurate predictions in the decreasing phase, while EpiGNN makes bad predictions in the corresponding phase.
}
\label{fig:covid-rapid}
\end{figure}

\section{Model Efficiency in Computation Time}\label{sec:efficiency}
The training time and inference time (on two NVIDIA-2080ti graphic cards) of STGCN, MTGNN, GTS, and
RDGCN on the Metr-la dataset are demonstrated in Table~\ref{table:efficiency}.
It's observed that RDGCN takes less time in both training and inference than the other models,
since the RDGCN contains significantly less number of parameters than the baseline models. 

\small
\begin{table}[!h]
\vspace{-.1in}
\small
\centering
\caption{The computation time on the Metr-la dataset. 
}

\resizebox{\columnwidth}{!}{

\begin{tabular}{l|cccc}
    \midrule
    & & \# Parameters&Training (s/epoch) & Inference (s) \\
    \midrule
    \multirow{6}{*}{Metr-la} & STGCN & 458865 & 0.5649 & 0.0232 \\
    & MTGNN & 405452 & 0.5621 & 0.0607 \\
    & GTS & 38377299 & 1.0632 & 0.1641 \\
    & STGNCDE & 374904 & 1.7114 & 0.3729 \\
    & MTGODE & 138636 & 1.6158 & 0.3491 \\
    & RDGCN & 872 & 0.0308 & 0.0037 \\
    \midrule
    \multirow{3}{*}{Japan-prefectures} & ColaGNN & 4272 & 0.0297 & 0.0065 \\
    & EpiGNN & 16875 & 0.0311 & 0.0073 \\
    & SIRGCN & 181 & 0.0289 & 0.0063 \\
    \midrule    
\end{tabular}}

\label{table:efficiency}
\end{table}
\normalsize

The training and inference time of ColaGNN, EpiGNN, and SIRGCN are shown in Table~\ref{table:efficiency}. 
SIRGCN has significantly less number of parameters than the baseline models.
We acknowledge that the computational time of SIRGCN is similar to that of the baseline models, as the baselines are not as deep or dense as traffic prediction models and do not require a large amount of data for training.

\small
\begin{table}[!h]
\vspace{-.1in}
\small
\centering
\caption{The computation time on the Japan-Prefectures dataset. 
}



\label{table:efficiency-sir}
\end{table}
\normalsize

\section{Experimental Settings} \label{subsec:MAML}

\textbf{Evaluation.}  
We assume that all zeros in the datasets are missing values, and we remove the predicted speed
when the ground truth is $0$, or when the last speed recorded is $0$. 

\textbf{Hyperparameter Settings.} 
RDGCN and SIRGCN are optimized via Adam. The batch size is set as 64. The learning rate is set as $0.001$, and the early stopping strategy is used with a patience of $30$ epochs. 
In traffic speed prediction, the training and validation set are split by a ratio of 3:1 from the weekday subset, and the test data is sampled from the weekend subset with different patterns.
As for baselines,
we use identical hyperparameters as released in their works.
In ILI prediction, the training and validation set are split by a ratio of 5:2 from the Winter-Summer subset, and the test data is sampled from the Spring-Fall subset with different patterns.
The Susceptible population at the beginning of each ILI period is $10\%$ of the total population in each Prefectures or States. 
As for baselines,
we also use identical hyperparameters as released in their works.
We approximate the total number of populations by the average of the annual sum of infectious cases, multiplied by $10$.


\textbf{MAML Settings.}
Our experiment involves the following steps: (1)We randomly select sequences of 12 consecutive weekdays (same as the Limited and Mismatched Data experiment.), and sample four-hour data as the training set. We evaluate the model with hourly data on weekends. (2) We divide the training set into two equal parts: the support set and the query set. (3) We use the support set to compute adapted parameters. (4) We use the adapted parameters to update the MAML parameters on the query set. (5) We repeat this process $200$ times to obtain initial parameters for the baseline model. (6) We train baselines using the obtained initial parameters. The learning rate for the inner loop is $0.00005$, and for the outer loop is $0.0005$, and MAML is trained for $200$ epochs. 

\end{document}